\newcommand{\usertext}{\text{usr}}
\newcommand{\robottext}{\text{rob}}
\newcommand{\policy}{\pi}
\newcommand{\policyuser}{\policy^\usertext}
\newcommand{\policyusergoal}{\policy^\usertext_\goal}
\newcommand{\goal}{g}
\newcommand{\Goal}{G}
\newcommand{\actionuser}{u}
\newcommand{\Actionuser}{U}
\newcommand{\actionrobot}{a}
\newcommand{\Actionrobot}{A}
\newcommand{\userinputtoaction}{\mathcal{D}}
\newcommand{\state}{s}
\newcommand{\State}{S}
\newcommand{\belief}{b}
\newcommand{\staterobot}{x}
\newcommand{\Staterobot}{X}
\newcommand{\transition}{T}
\newcommand{\pomdpohm}{\Omega}
\newcommand{\costrobot}{C^\robottext}
\newcommand{\costusergoal}{C^\usertext_\goal}
\newcommand{\qpomdp}{Q}
\newcommand{\qmdp}{Q_{\goal}}
\newcommand{\vmdp}{V_{\goal}}
\newcommand{\qmdpt}[1]{Q_{\goal}^{#1}}
\newcommand{\vmdpt}[1]{V_{\goal}^{#1}}
\newcommand{\qsoft}{Q^{\approx}}
\newcommand{\vsoft}{V^{\approx}}
\newcommand{\qgoal}{\qmdp}
\newcommand{\vgoal}{\vmdp}
\newcommand{\qgoalsoft}{\qsoft_\goal}
\newcommand{\vgoalsoft}{\vsoft_\goal}
\newcommand{\qgoalsoftt}[1]{\qsoft_{ \goal,{#1}}}
\newcommand{\vgoalsoftt}[1]{\vsoft_{ \goal,{#1}}}
\newcommand{\costgoal}{C_\goal}
\newcommand{\costgoaluser}{\costgoal^\usertext}
\newcommand{\target}{\kappa}
\newcommand{\Target}{K}
\newcommand{\costtarg}{C_{\target}}
\newcommand{\costtargprime}{C_{\target'}}
\newcommand{\costtargstar}{C_{\target^*}}
\newcommand{\costtargrobot}{C_{\target}^{\robottext}}
\newcommand{\costtarguser}{C_{\target}^{\usertext}}
\newcommand{\qtarg}{Q_{\target}}
\newcommand{\vtarg}{V_{\target}}
\newcommand{\vtargt}[1]{V_{\target}^{#1}}
\newcommand{\vtargstar}{V_{\target^*}}
\newcommand{\vtargstart}[1]{V_{\target^*}^{#1}}
\newcommand{\qtargsoft}{Q^{\approx}_{\target}}
\newcommand{\vtargsoft}{V^{\approx}_{\target}}
\newcommand{\qtargsoftt}[1]{\qsoft_{ \target,{#1}}}
\newcommand{\vtargsoftt}[1]{\vsoft_{ \target,{#1}}}
\newcommand{\traj}{\xi}
\newcommand{\trajat}[1]{\traj_{#1}^{t \rightarrow T}}
\newcommand{\trajatp}[1]{\traj_{#1}^{t+1 \rightarrow T}}
\newcommand{\trajtot}{\traj^{0 \rightarrow t}}
\newcommand{\sref}[1]{Sec.~\ref{#1}}
\newcommand{\figref}[1]{Fig.~\ref{#1}}
\newtheorem{theorem}{Theorem}
\DeclareMathOperator*{\argmin}{arg\,min}
\DeclareMathOperator*{\argmax}{arg\,max}
\DeclareMathOperator*{\softmin}{\text{soft}\!\min}
\newcommand{\xxnote}[3]{}
  \renewcommand{\xxnote}[3]{\color{#2}{#1: #3}}
\title{Shared Autonomy via Hindsight Optimization}
\author{ \parbox{\linewidth}{\centering Shervin Javdani, Siddhartha S. Srinivasa, J. Andrew Bagnell\\
         The Robotics Institute, 
         Carnegie Mellon University\\
         {\tt \small \{sjavdani, siddh, dbagnell\}@cs.cmu.edu}}
}
\date{January 2015}
\begin{document}

\maketitle

\begin{abstract}
In \emph{shared autonomy}, user input and robot autonomy are combined to control a robot to achieve a goal. Often, the robot does not know a priori which goal the user wants to achieve, and must both predict the user's intended goal, and assist in achieving that goal. We formulate the problem of shared autonomy as a Partially Observable Markov Decision Process with uncertainty over the user's goal. We utilize maximum entropy inverse optimal control to estimate a distribution over the user's goal based on the history of inputs. Ideally, the robot assists the user by solving for an action which minimizes the expected cost-to-go for the (unknown) goal. As solving the POMDP to select the optimal action is intractable, we use hindsight optimization to approximate the solution. In a user study, we compare our method to a standard predict-then-blend approach. We find that our method enables users to accomplish tasks more quickly while utilizing less input. However, when asked to rate each system, users were mixed in their assessment, citing a tradeoff between maintaining control authority and accomplishing tasks quickly.
\end{abstract}

\IEEEpeerreviewmaketitle

\section{Introduction}
\label{sec:intro}
Robotic teleoperation enables a user to achieve their intended goal by providing inputs into a robotic system. In direct teleoperation, user inputs are mapped directly to robot actions, putting the burden of control entirely on the user. However, input interfaces are often noisy, and may have fewer degrees of freedom than the robot they control. This makes operation tedious, and many goals impossible to achieve. \emph{Shared Autonomy} seeks to alleviate this by combining teleoperation with autonomous assistance.

A key challenge in shared autonomy is that the system may not know a priori which goal the user wants to achieve. 
Thus, many prior works~\cite{kofman_2005, aarno_2005_virtualfixtures, yu_2005, dragan_2013_assistive} split shared autonomy into two parts: 1) predict the user's goal, and 2) assist for that single goal, potentially using prediction confidence to regulate assistance. We refer to this approach as \emph{predict-then-blend}.

In contrast, we follow more recent work~\cite{hauser_2013} which assists for an entire distribution over goals, enabling assistance even when the confidence for any particular goal is low. This is particularly important in cluttered environments, where it is difficult - and sometimes impossible - to predict a single goal. 


We formalize shared autonomy by modeling the system's task as a Partially Observable Markov Decision Process (POMDP)~\cite{smallwood_1973_pomdp, kaelbling_1998_pomdp} with uncertainty over the user's goal. We assume the user is executing a policy for their known goal \emph{without} knowledge of assistance. In contrast, the system models both the user input and robot action, and solves for an assistance action that minimizes the total expected cost-to-go of both systems. See \figref{fig:robot_human_model}. 

\begin{figure}[t]
\centering
\includegraphics[width=0.49\textwidth, trim=28 53 18 2, clip=true]{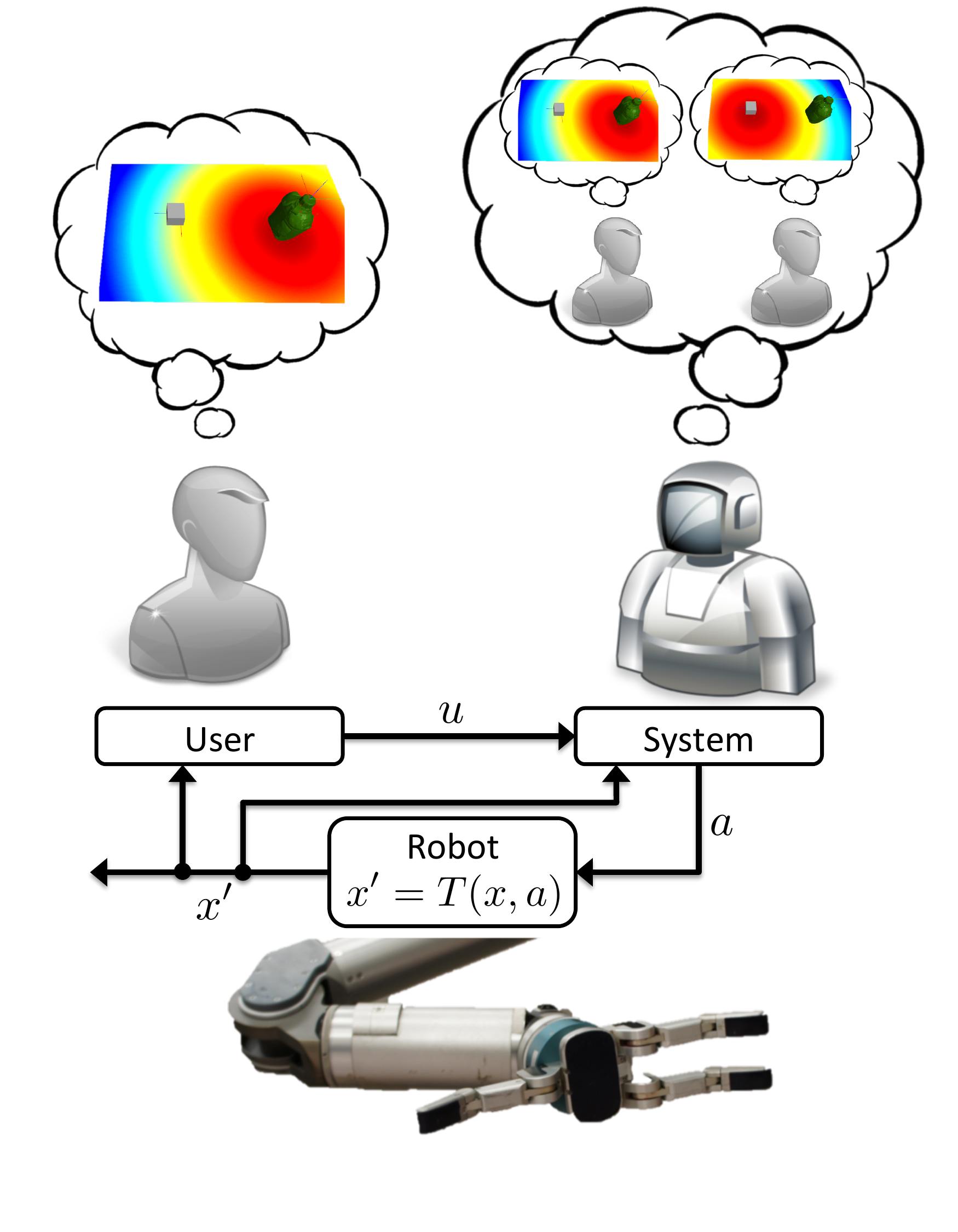}
\caption{ Our shared autonomy framework. We assume the user is executing a stochastically optimal policy for a known goal, without knowledge of assistance. We depict this single-goal policy as a heatmap plotting the value function at each position. Here, the user's target is the canteen. The shared autonomy system models all possible goals and their corresponding policies. From user inputs $\actionuser$, a distribution over goals is inferred. Using this distribution and the value functions for each goal, an action $\actionrobot$ is executed on the robot, transitioning the robot state from $\staterobot$ to $\staterobot'$. The user and shared autonomy system both observe this state, and repeat action selection.}
 \label{fig:robot_human_model}
\end{figure}

%

The result is a system that will assist for any distribution over goals. When the system is able to make progress for all goals, it does so automatically. When a good assistance strategy is ambiguous (e.g. the robot is in between two goals), the output can be interpreted as a blending between user input and robot autonomy based on confidence in a particular goal, which has been shown to be effective~\cite{dragan_2013_assistive}. See \figref{fig:teledata}.

Solving for the optimal action in our POMDP is intractable. Instead, we approximate using QMDP~\cite{littman_1995}, also referred to as hindsight optimization~\cite{chong_2000,yoon_2008}. This approximation has many properties suitable for shared autonomy: it is computationally efficient, works well when information is gathered easily~\cite{koval_2014}, and will not oppose the user to gather information.

Additionally, we assume each goal consists of multiple targets (e.g. an object has multiple grasp poses), of which any are acceptable to a user with that goal. Given a known cost function for each target, we derive an efficient computation scheme for goals that decomposes over targets.

To evaluate our method, we conducted a user study where users teleoperated a robotic arm to grasp objects using our method and a standard predict-then-blend approach. Our results indicate that users accomplished tasks significantly more quickly with less control input with our system. However, when surveyed, users tended towards preferring the simpler predict-then-blend approach, 
citing a trade-off between control authority and efficiency. 
We found this surprising, as prior work indicates that task completion time correlates strongly with user satisfaction, even at the cost of control authority~\cite{dragan_2013_assistive, hauser_2013, gombolay_2014}. We discuss potential ways to alter our model to take this into account.


\section{Related Works}
\label{sec:related}
We separate related works into goal prediction and assistance strategies.

\subsection{Goal Prediction}
Maximum entropy inverse optimal control (MaxEnt IOC) methods have been shown to be effective for goal prediction~\cite{ziebart_2008, ziebart_2009, ziebart_2012, dragan_2013_assistive}. In this framework, the user is assumed to be an intent driven agent approximately optimizing a cost function. By minimizing the worst-case predictive loss, Ziebart et al.~\cite{ziebart_2008} derive a model where trajectory probability decreases exponentially with cost, and show how this cost function can be learned efficiently from user demonstrations. They then derive a method for inferring a distribution over goals from user inputs, where probabilities correspond to how efficiently the inputs achieve each goal~\cite{ziebart_2009}. While our framework allows for any prediction method, we choose to use MaxEnt IOC, as we can directly optimize for the user's cost in our POMDP.

Others have approached the prediction problem by utilizing various machine learning methods. Koppula and Saxena~\cite{koppula_2013} extend conditional random fields (CRFs) with object affordances to predict potential human motions. Wang et al.~\cite{wang_2013_intentioninference} learn a generative predictor by extending Gaussian Process Dynamical Models (GPDMs) with a latent variable for intention. Hauser~\cite{hauser_2013} utilizes a Gaussian mixture model over task types (e.g. reach, grasp), and predicts both the task type and continuous parameters for that type (e.g. movements) using Gaussian mixture autoregression.

\subsection{Assistance Methods}
Many prior works assume the user's goal is known, and study how methods such as potential fields~\cite{aigner_1997, crandall_2002} and motion planning~\cite{hauser_2011} can be utilized to assist for that goal.

For multiple goals, many works follow a predict-then-blend approach of predicting the most likely goal, then assisting for that goal. These methods range from taking over when confident~\cite{fagg_2004, kofman_2005}, to virtual fixtures to help follow paths~\cite{aarno_2005_virtualfixtures}, to blending with a motion planner~\cite{dragan_2013_assistive}. Many of these methods can be thought of as an \emph{arbitration} between the user's policy and a fully autonomous policy for the most likely goal~\cite{dragan_2013_assistive}. These two policies are blended, where prediction confidence regulates the amount of assistance.


Recently, Hauser~\cite{hauser_2013} presented a system which provides assistance while reasoning about the entire distribution over goals. Given the current distribution, the planner optimizes for a trajectory that minimizes the expected cost, assuming that no further information will be gained. After executing the plan for some time, the distribution is updated by the predictor, and a new plan is generated for the new distribution. In order to efficiently compute the trajectory, it is assumed that the cost function corresponds to squared distance, resulting in the calculation decomposing over goals. In contrast, our model is more general, enabling any cost function for which a value function can be computed. Furthermore, our POMDP model enables us to reason about future human actions.

Planning with human intention models has been used to avoid moving pedestrians. Ziebart et al.~\cite{ziebart_2009} use MaxEnt IOC to learn a predictor of pedestrian motion, and use this to predict the probability a location will be occupied at each time step. They build a time-varying cost map, penalizing locations likely to be occupied, and optimize trajectories for this cost. Bandy et al.~\cite{bandy_2012} use fixed models for pedestrian motions, and focus on utilizing a POMDP framework with SARSOP~\cite{kurniawati_2008} for selecting good actions. Like our approach, this enables them to reason over the entire distribution of potential goals. They show this outperforms utilizing only the maximum likelihood estimate of goal prediction for avoidance. 

Outside of robotics, Fern and Tadepalli~\cite{fern_2010} have studied MDP and POMDP models for assistance. Their study focuses on an interactive assistant which suggest actions to users, who then accept or reject the action. They show that optimal action selection even in this simplified model is PSPACE-complete. However, a simple greedy policy has bounded regret.

Nguyen et al.~\cite{nguyen_2011} and Macindoe et al.~\cite{macindoe_2012} apply similar models to creating agents in cooperative games, where autonomous agents simultaneously infer human intentions and take assistance actions. Here, the human player and autonomous agent each control separate characters, and thus affect different parts of state space. Like our approach, they model users as stochastically optimizing an MDP, and solve for assistance actions with a POMDP. In contrast to these works, our action space and state space are continuous.

\section{Problem Statement} 
\label{sec:problem}

\begin{figure}[t]
\centering
 \begin{subfigure}{0.240\textwidth}
   \centering 
   \hspace*{1mm} \includegraphics[width=0.92\textwidth, trim=400 50 400 450, clip=true]{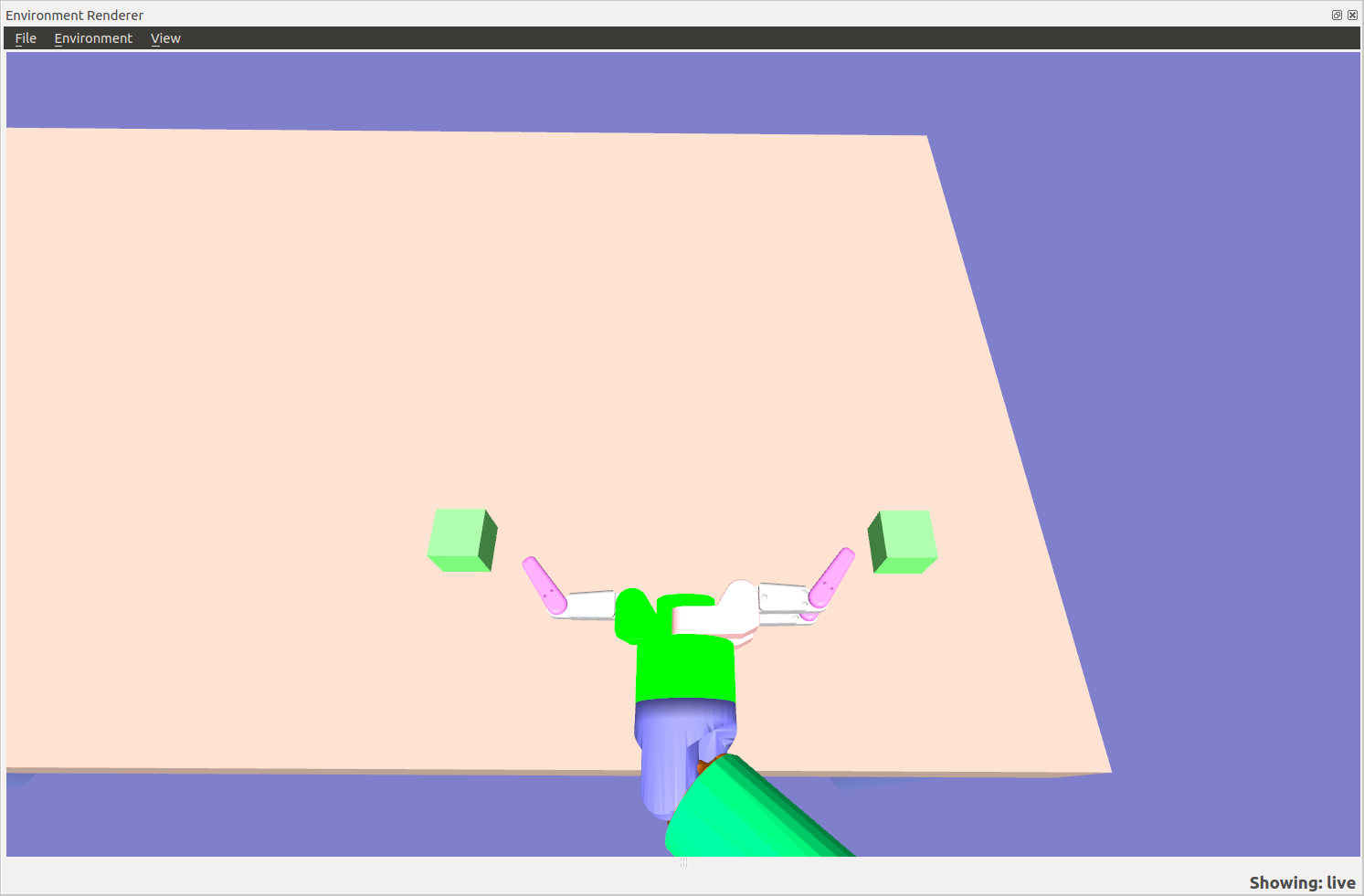}
   \hspace*{-2mm} \includegraphics[trim=1 3 0 -3, clip=true]{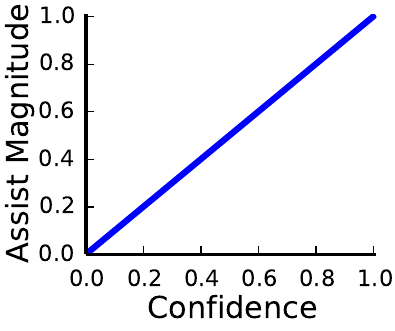} \hfill
   \caption{}
 \label{fig:teledata_cen}
 \end{subfigure}
 \hfill
 \begin{subfigure}{0.240\textwidth}
   \centering
   \hspace*{1mm} \includegraphics[width=0.92\textwidth, trim=400 50 400 450, clip=true]{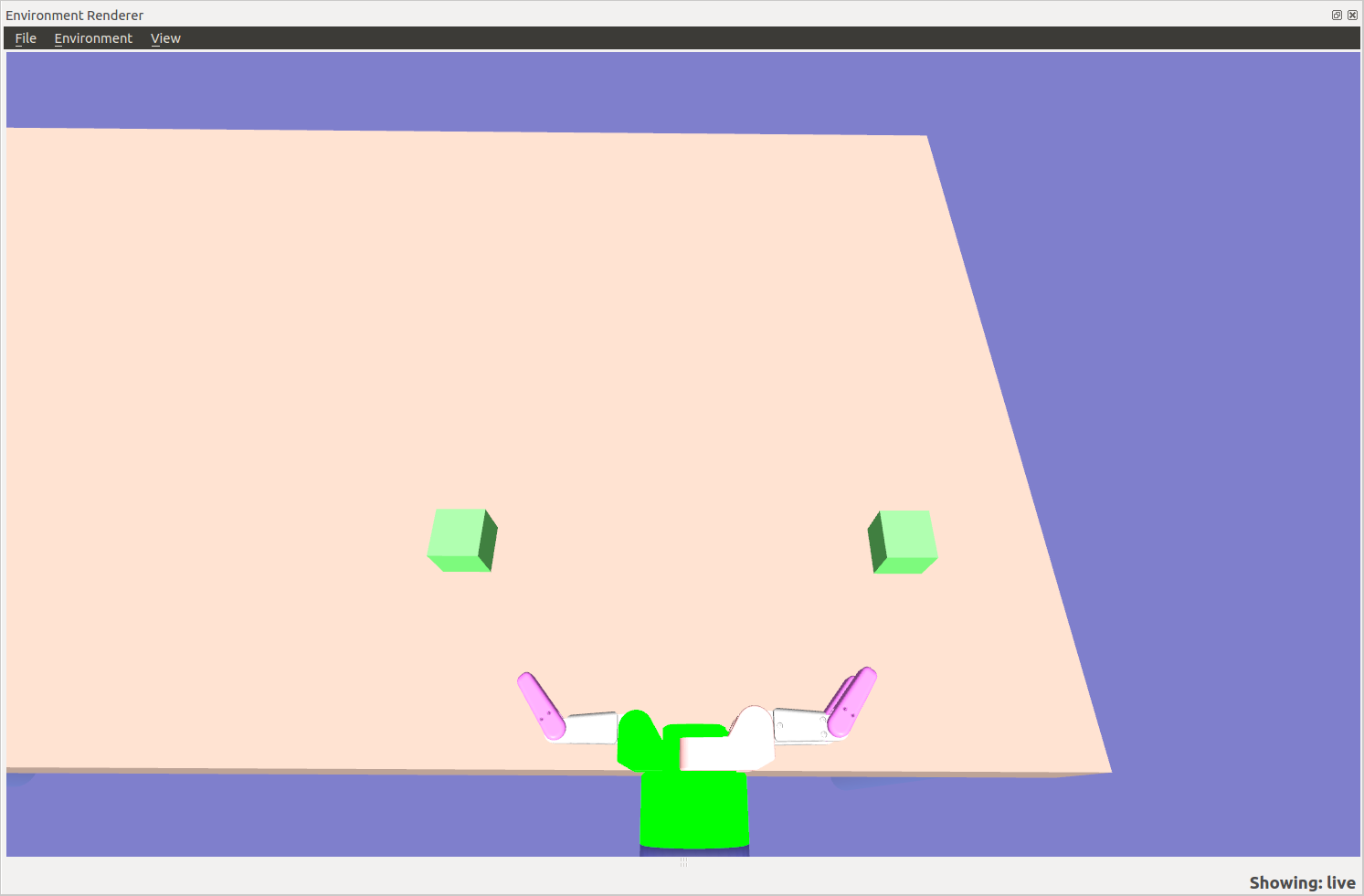}
   \hspace*{-1mm}\includegraphics[trim=1 3 0 -3, clip=true]{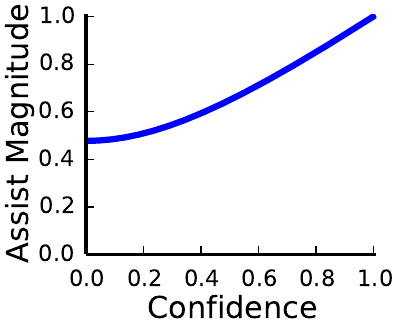} \hfill
  \caption{}
 \label{fig:teledata_back}
 \end{subfigure}
 \caption{\footnotesize Arbitration as a function of confidence with two goals. Confidence $=\max_g p(g) - \min_g p(g)$, which ranges from $0$ (equal probability) to $1$ (all probability on one goal). (\subref{fig:teledata_cen}) The hand is directly between the two goals, where no action assists for both goals. As confidence for one goal increases, assistance increases linearly. (\subref{fig:teledata_back}) From here, going forward assists for both goals, enabling the assistance policy to make progress even with $0$ confidence.} \label{fig:teledata}
\end{figure} 


We assume there are a discrete set of possible goals, one of which is the user's intended goal. The user supplies inputs through some interface to achieve their goal. Our shared autonomy system does not know the intended goal a priori, but utilizes user inputs to infer the goal. It selects actions to minimize the expected cost of achieving the user's goal.

Formally, let $\staterobot \in \Staterobot$ be the continuous robot state (e.g. position, velocity), and let $\actionrobot \in \Actionrobot$ be the continuous actions (e.g. velocity, torque). We model the robot as a deterministic dynamical system with transition function $\transition: \Staterobot \times \Actionrobot \rightarrow \Staterobot$. 
The user supplies continuous inputs $\actionuser \in \Actionuser$ via an interface (e.g. joystick, mouse). These user inputs map to robot actions through a known deterministic function $\userinputtoaction: \Actionuser \rightarrow \Actionrobot$, corresponding to the effect of \emph{direct teleoperation}.

In our scenario, the user wants to move the robot to one goal in a discrete set of goals $\goal \in \Goal$. We assume access to a stochastic user policy for each goal $\policyusergoal(\staterobot) = p(\actionuser | \staterobot, \goal)$, usually learned from user demonstrations. 
In our system, we model this policy using the maximum entropy inverse optimal control (MaxEnt IOC)~\cite{ziebart_2008} framework, which assumes the user is approximately optimizing some cost function for their intended goal $g$, $\costusergoal: \Staterobot \times \Actionuser \rightarrow \mathcal{R}$. This model corresponds to a goal specific Markov Decision Process (MDP), defined by the tuple $\left(\Staterobot, \Actionuser, \transition, \costusergoal\right)$. We discuss details in \sref{sec:prediction}.


Unlike the user, our system does not know the intended goal. We model this with a Partially Observable Markov Decision Process (POMDP) with uncertainty over the user's goal. A POMDP maps a distribution over states, known as the \emph{belief} $\belief$, to actions. Define the system state $\state \in \State$ as the robot state augmented by a goal, $\state = (\staterobot, \goal)$ and $\State = \Staterobot \times \Goal$. In a slight abuse of notation, we overload our transition function such that $\transition: \State \times \Actionrobot \rightarrow \State$, which corresponds to transitioning the robot state as above, but keeping the goal the same.

In our POMDP, we assume the robot state is known, and all uncertainty is over the user's goal. Observations in our POMDP correspond to user inputs $\actionuser \in \Actionuser$. Given a sequence of user inputs, we infer a distribution over system states (equivalently a distribution over goals) using an observation model $\pomdpohm$. This corresponds to computing $\policyusergoal(\staterobot)$ for each goal, and applying Bayes' rule. We provide details in \sref{sec:prediction}.

The system uses cost function $\costrobot: \State \times \Actionrobot \times \Actionuser \rightarrow \mathcal{R}$, corresponding to the cost of taking robot action $\actionrobot$ when in system state $\state$ and the user has input $\actionuser$. Note that allowing the cost to depend on the observation $\actionuser$ is non-standard, but important for shared autonomy, as prior works suggest that users prefer maintaining control authority~\cite{kim_2012}. This formulation enables us to penalize robot actions which deviate from $\userinputtoaction(\actionuser)$. Our shared autonomy POMDP is defined by the tuple $\left(\State, \Actionrobot, \transition, \costrobot, \Actionuser, \pomdpohm \right)$. The optimal solution to this POMDP minimizes the expected accumulated cost $\costrobot$. As this is intractable to compute, we utilize Hindsight Optimization to select actions, described in \sref{sec:hindsight}.

\begin{figure*}[t]
\centering
 \begin{subfigure}{0.32\textwidth}
   \centering 
   \begin{tikzpicture}[every node/.style={anchor=south west,inner sep=0pt}, x=1mm, y=1mm,]    
     \node {\includegraphics[width=1.0\textwidth, trim=250 150 200 190, clip=true]{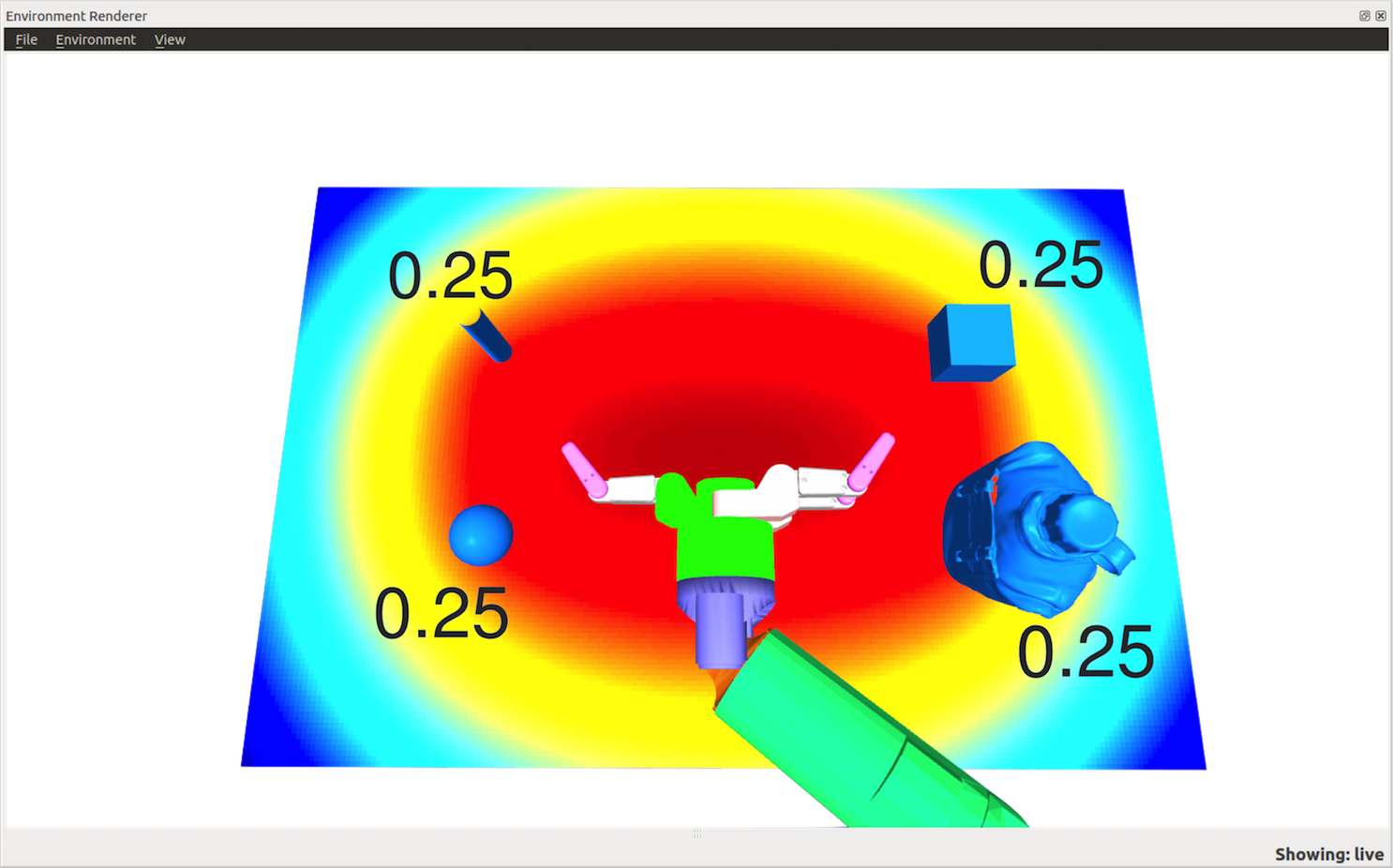}} ;
     \node [opacity=0.6]{\includegraphics[width=1.0\textwidth, trim=250 150 200 190, clip=true]{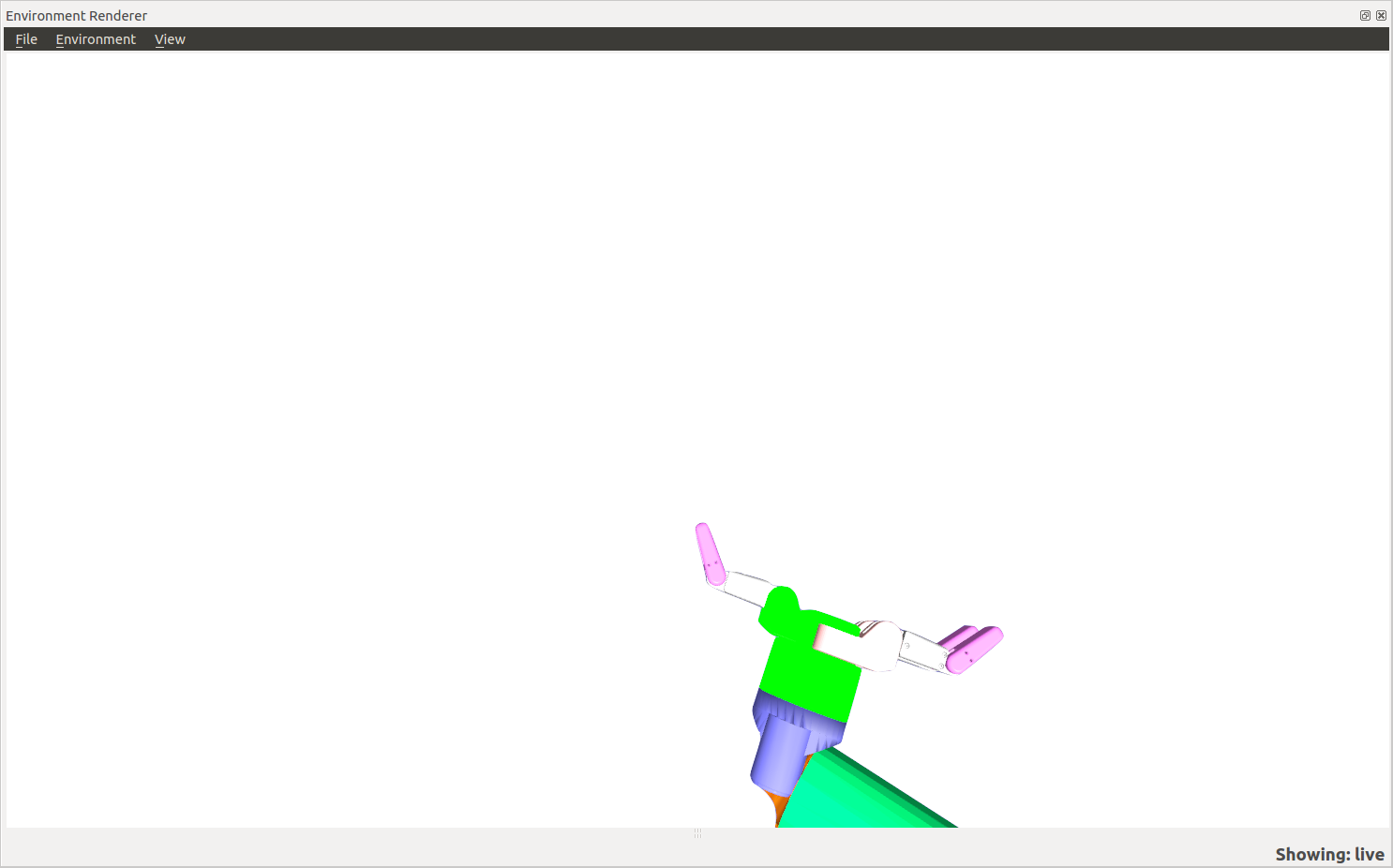} };
 \end{tikzpicture}
  \includegraphics[width=0.7\textwidth, trim=150 515 640 150, clip=true]{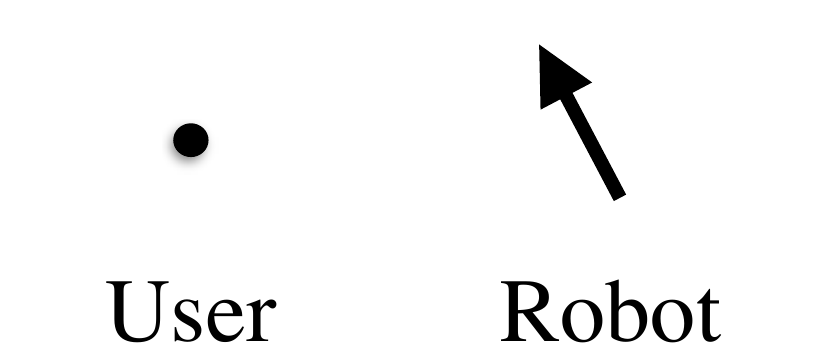}
   \caption{ }
 \label{fig:valfunc_2}
 \end{subfigure}
 \begin{subfigure}{0.32\textwidth}
   \centering 
   \begin{tikzpicture}[every node/.style={anchor=south west,inner sep=0pt}, x=1mm, y=1mm,]    
     \node {\includegraphics[width=1.0\textwidth, trim=250 150 200 190, clip=true]{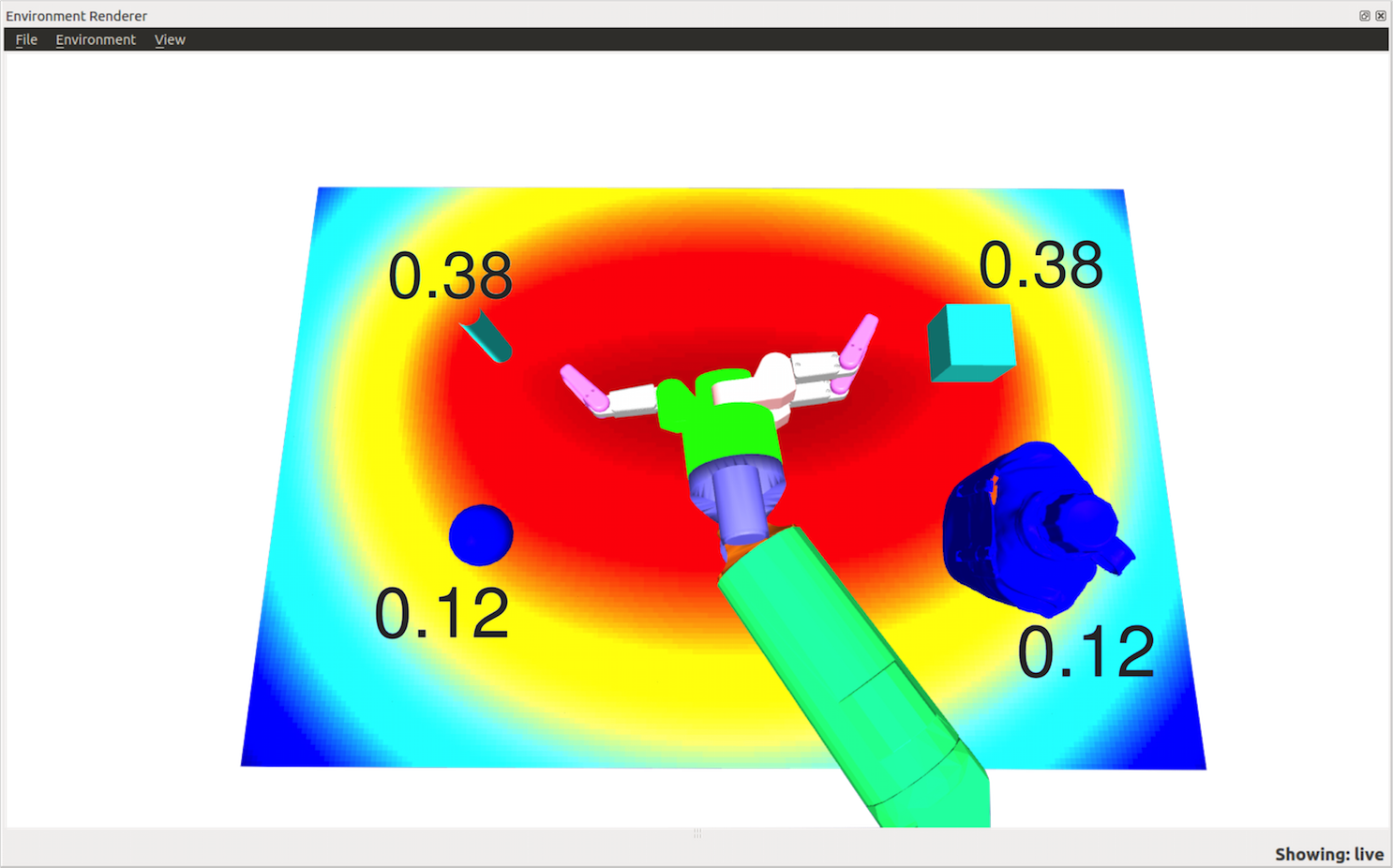}} ;
     \node [opacity=0.6]{\includegraphics[width=1.0\textwidth, trim=250 150 200 190, clip=true]{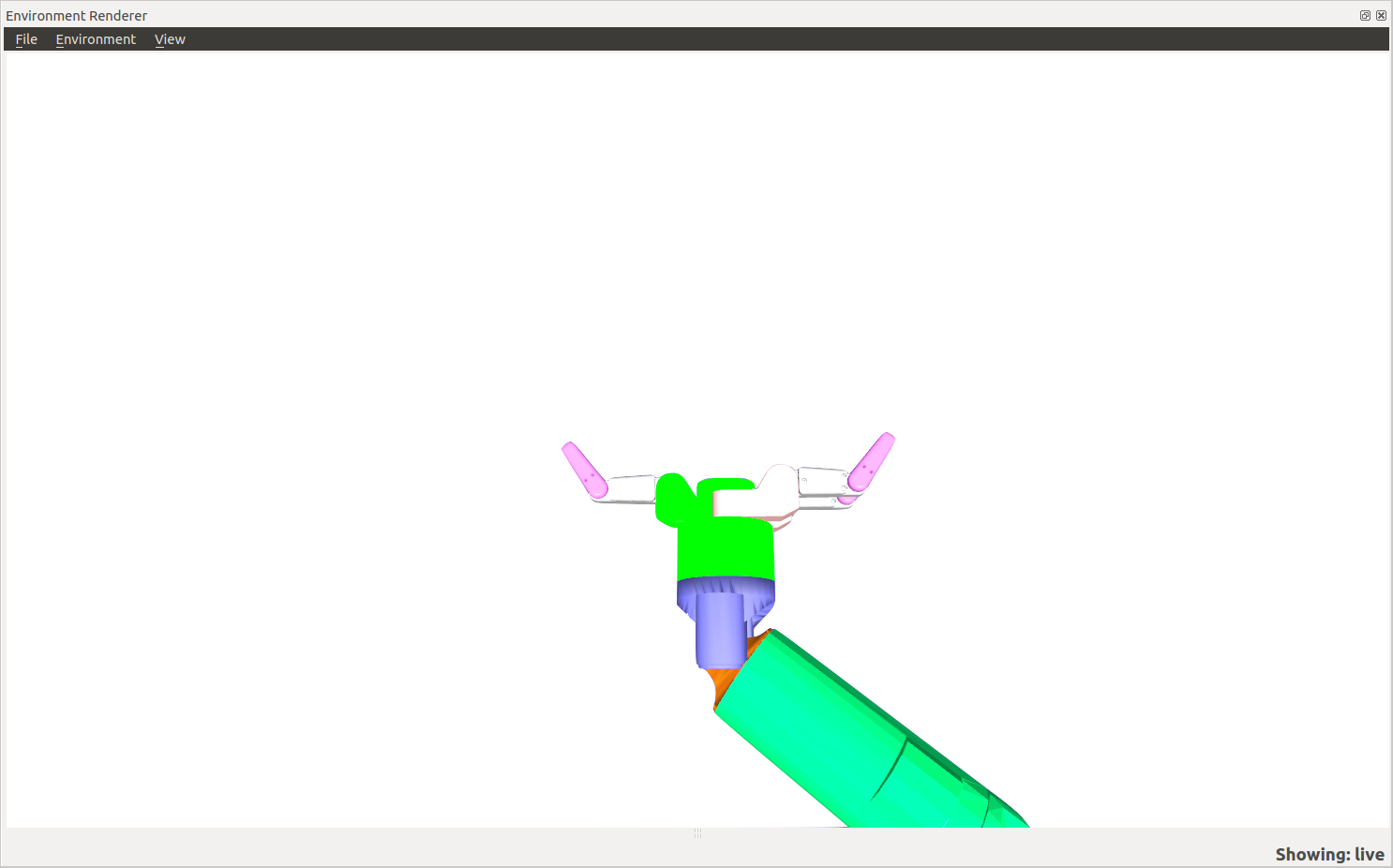} };
 \end{tikzpicture}
  \includegraphics[width=0.7\textwidth, trim=150 515 640 150, clip=true]{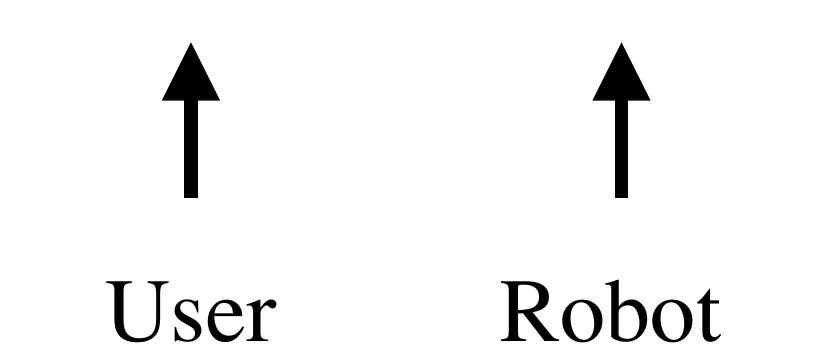}
 \caption{ }
 \label{fig:valfunc_3}
 \end{subfigure}
 \begin{subfigure}{0.32\textwidth}
   \centering 
   \begin{tikzpicture}[every node/.style={anchor=south west,inner sep=0pt}, x=1mm, y=1mm,]    
     \node {\includegraphics[width=1.0\textwidth, trim=250 150 200 190, clip=true]{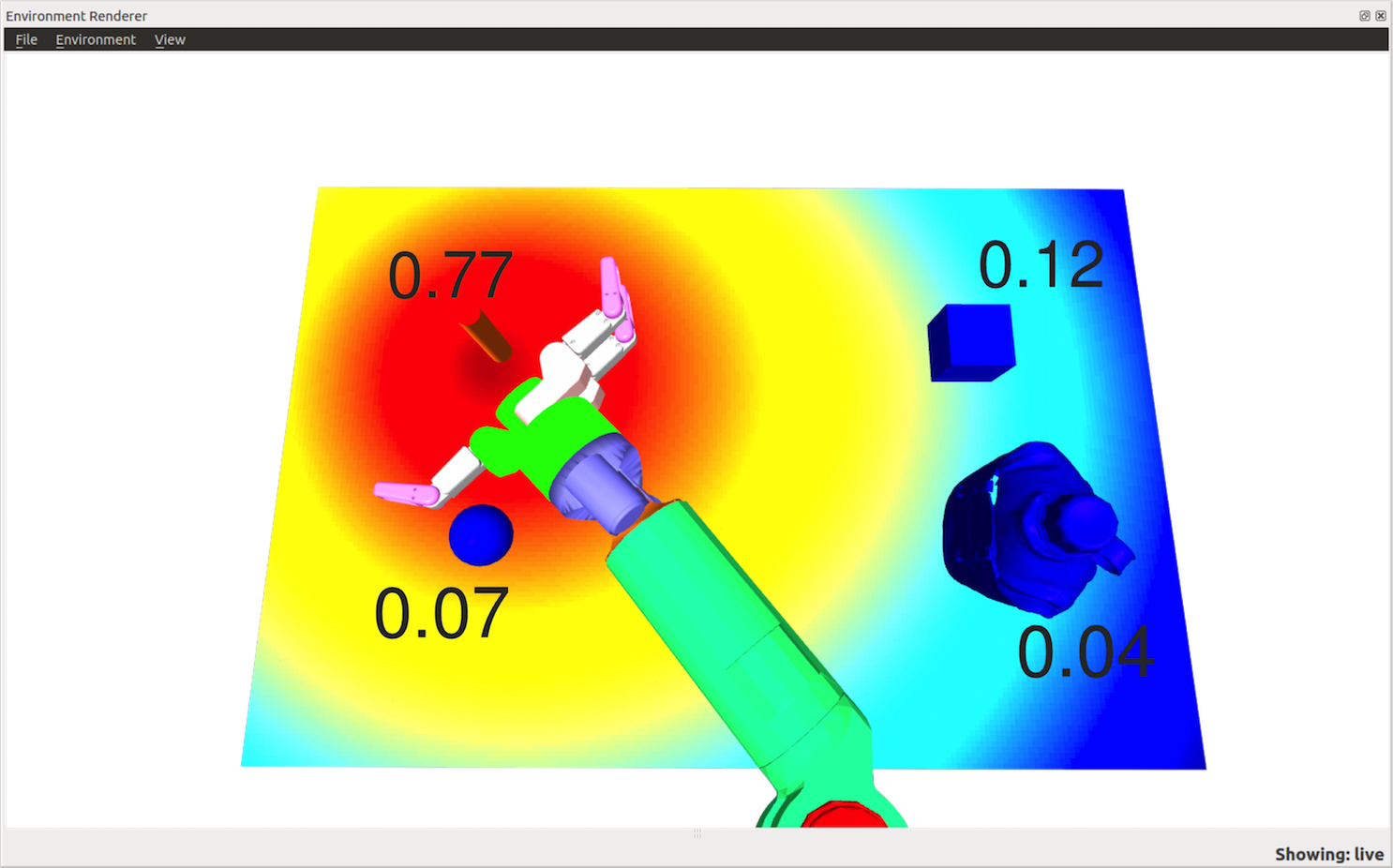}} ;
     \node [opacity=0.6]{\includegraphics[width=1.0\textwidth, trim=250 150 200 190, clip=true]{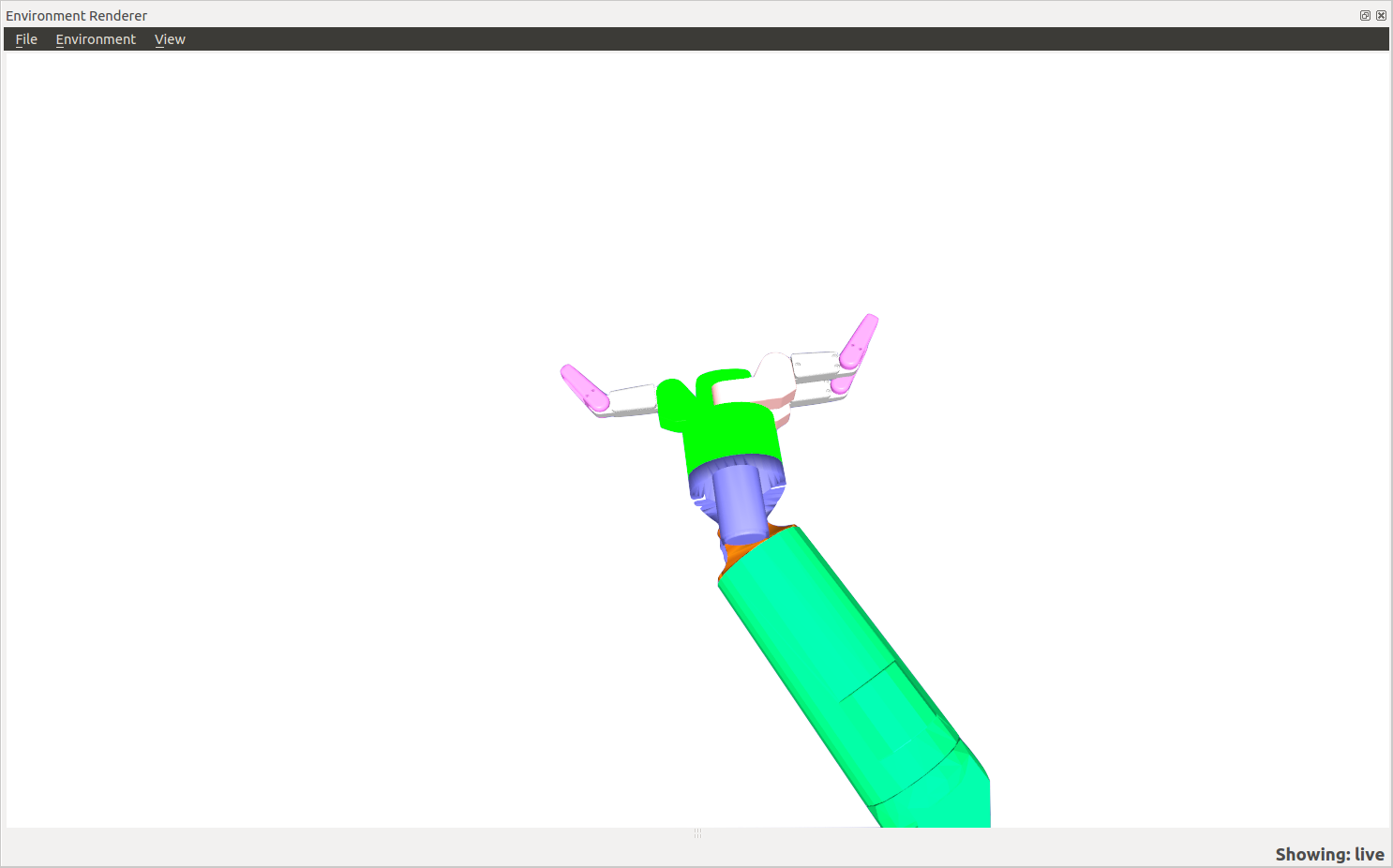} };
 \end{tikzpicture}
  \includegraphics[width=0.7\textwidth, trim=150 515 640 150, clip=true]{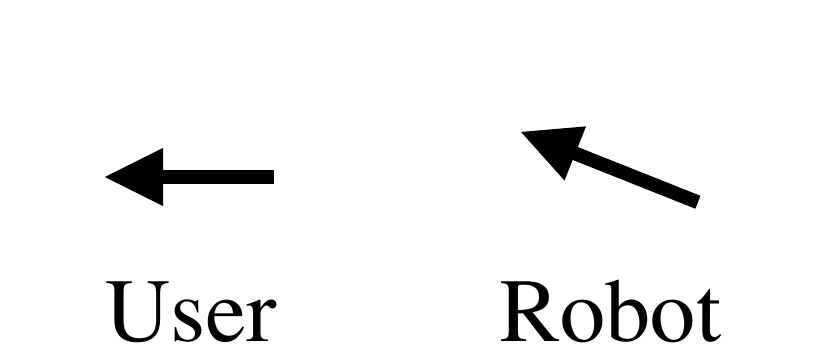}
 \caption{ }
 \label{fig:valfunc_4}
 \end{subfigure}
  \label{fig:valfunc}
  \caption{Estimated goal probabilities and value function for an object grasping trial. Top row: the probability of each goal object and a 2-dimensional slice of the estimated value function. The transparent end-effector corresponds to the initial state, and the opaque end-effector to the next state. Bottom row: the user input and robot control vectors which caused this motion. (\subref{fig:valfunc_2}) Without user input, the robot automatically goes to the position with lowest value, while estimated probabilities and value function are unchanged. (\subref{fig:valfunc_3}) As the user inputs ``forward'', the end-effector moves forward, the probability of goals in that direction increase, and the estimated value function shifts in that direction. (\subref{fig:valfunc_4}) As the user inputs ``left'', the goal probabilities and value function shift in that direction. Note that as the probability of one object dominates the others, the system automatically rotates the end-effector for grasping that object.}
\end{figure*}

\section{Modelling the user policy} 
\label{sec:prediction}

We now discuss our model of $\policyusergoal$. In principle, we could use any generative predictor~\cite{koppula_2013, wang_2013_intentioninference}. We choose to use maximum entropy inverse optimal control (MaxEnt IOC)~\cite{ziebart_2008}, as it explicitly models a user cost function $\costusergoal$. We can then optimize this directly by defining $\costrobot$ as a function of $\costusergoal$.

Define a sequence of robot states and user inputs as $\traj = \left\{ \staterobot_0, \actionuser_0, \cdots, \staterobot_T, \actionuser_T \right\}$. Note that sequences are not required to be trajectories, in that $\staterobot_{t+1}$ is not necessarily the result of applying $\actionuser_t$ in state $\staterobot_t$. Define the cost of a sequence as the sum of costs of all state-input pairs, $\costgoaluser(\traj) = \sum_{t} \costgoaluser(\staterobot_t, \actionuser_t)$. Let $\trajtot$ be a sequence from time $0$ to $t$, and $\trajat{\staterobot}$ a sequence of from time $t$ to $T$, starting at robot state $\staterobot$.


 It has been shown that minimizing the worst-case predictive loss results in a model where the probability of a sequence decreases exponentially with cost, $p(\traj | \goal) \propto \exp(-\costgoaluser(\traj))$~\cite{ziebart_2008}. Importantly, one can efficiently learn a cost function consistent with this model from demonstrations of user execution~\cite{ziebart_2008}.

Computationally, the difficulty lies in computing the normalizing factor $\int_{\traj} \exp(-\costgoaluser(\traj))$, known as the partition function. Evaluating this explicitly would require enumerating all sequences and calculating their cost.

However, as the cost of a sequence is the sum of costs of all state-action pairs, dynamic programming can be utilized to compute this through soft-minimum value iteration~\cite{ziebart_2009,ziebart_2012}:
\begin{align*}
  \qgoalsoftt{t}(\staterobot, \actionuser) = \costgoaluser(\staterobot, \actionuser) + \vgoalsoftt{t+1}(\staterobot')\\
  \vgoalsoftt{t}(\staterobot) = \softmin_{\actionuser} \qgoalsoftt{t}(\staterobot, \actionuser)
\end{align*}
Where $\staterobot '= \transition(\staterobot, \userinputtoaction(\actionuser))$, the result of applying $\actionuser$ at state $\staterobot$, and $\softmin_{x} f(x) = - \log \int_{x} \exp(-f(x)) dx$.

The log partition function is given by the soft value function, $\vgoalsoftt{t}(\staterobot) = - \log \int_{\trajat{\staterobot}} \exp\left(-\costgoaluser(\trajat{\staterobot})\right)$, where the integral is over all sequences starting at configuration $\staterobot$ and time $t$. Furthermore, the probability of a single input at a given configuration is given by $\policyuser_t(\actionuser | \staterobot, \goal) = \exp(\vgoalsoftt{t}(\staterobot) -\qgoalsoftt{t}(\staterobot, \actionuser))$~\cite{ziebart_2009}.

Many works derive a simplification that enables them to only look at the start and current configurations, ignoring the inputs in between~\cite{ziebart_2012, dragan_2013_assistive}. Key to this assumption is that $\traj$ corresponds to a trajectory, where applying action $\actionuser_t$ at $\staterobot_t$ results in $\staterobot_{t+1}$. However, if the system is providing assistance, this may not be the case. In particular, if the assistance strategy believes the user's goal is $\goal$, the assistance strategy will select actions to minimize $\costusergoal$. Applying these simplifications will result positive feedback, where the robot makes itself more confident about goals it already believes are likely. In order to avoid this, we ensure that the prediction probability comes from user inputs only, and not robot actions:
\begin{align*}
  p(\traj | \goal) &= \prod_t \policyuser_t(\actionuser_{t} | \staterobot_t, \goal)
\end{align*}
Finally, to compute the probability of a goal given the partial sequence up to $t$, we use Bayes' rule:
\begin{align*}
  p(\goal | \trajtot) &= \frac{p(\trajtot | \goal) p(\goal) }{\sum_{\goal'} p(\trajtot | \goal') p(\goal')}
\end{align*}
This corresponds to our POMDP observation model $\pomdpohm$.


\section{Hindsight Optimization} 
\label{sec:hindsight}

Solving POMDPs, i.e. finding the optimal action for any belief state, is generally intractable. We utilize the QMDP approximation~\cite{littman_1995}, also referred to as hindsight optimization~\cite{chong_2000,yoon_2008} to select actions. The idea is to estimate the cost-to-go of the belief by assuming full observability will be obtained at the next time step. 
The result is a system that never tries to gather information, but can plan efficiently in the deterministic subproblems. This concept has been shown to be effective in other domains~\cite{yoon_2008, yoon_2007}.

We believe this method is suitable for shared autonomy for many reasons. Conceptually, we assume the user will provide inputs at all times, and therefore we gain information without explicit information gathering. In this setting, works in other domains have shown that QMDP performs similarly to methods that consider explicit information gathering~\cite{koval_2014}. Computationally, QMDP is efficient to compute even with continuous state and action spaces, enabling fast reaction to user inputs. Finally, explicit information gathering where the user is treated as an oracle would likely be frustrating~\cite{guillory_2011_noise, amershi_2014}, and this method naturally avoids it.


Let $\qpomdp(\belief, \actionrobot, \actionuser)$ be the action-value function of the POMDP, estimating the cost-to-go of taking action $\actionrobot$ when in belief $\belief$ with user input $\actionuser$, and acting optimally thereafter. In our setting, uncertainty is only over goals, $\belief(\state) = \belief(\goal) = p(\goal | \trajtot)$.

Let $\qgoal(\staterobot, \actionrobot, \actionuser)$ correspond to the action-value for goal $\goal$, estimating the cost-to-go of taking action $\actionrobot$ when in state $\staterobot$ with user input $\actionuser$, and acting optimally for goal $\goal$ thereafter. The QMDP approximation is~\cite{littman_1995}:
\begin{align*}
  \qpomdp(\belief, \actionrobot, \actionuser) &= \sum_{\goal} \belief(\goal) \qgoal(\staterobot, \actionrobot, \actionuser)
\end{align*}

Finally, as we often cannot calculate $\argmax_{\actionrobot} \qpomdp(\belief, \actionrobot, \actionuser)$ directly, we use a first-order approximation, which leads to us to following the gradient of $\qpomdp(\belief, \actionrobot, \actionuser)$.

We now discuss two methods for approximating $\qgoal$:
\subsubsection{Robot and user both act}
Estimate $\actionuser$ with $\policyusergoal$ at each time step, and utilize $\costrobot(\{\staterobot, \goal\}, \actionrobot, \actionuser)$ for the cost. Using this cost, we could run q-learning algorithms to compute $\qgoal$. This would be the standard QMDP approach for our POMDP.


\subsubsection{Robot takes over}
Assume the user will stop supplying inputs, and the robot will complete the task. This enables us to use the cost function $\costrobot(\state, \actionrobot, \actionuser) = \costrobot(\state, \actionrobot, 0)$. Unlike the user, we can assume the robot will act optimally. Thus, for many cost functions we can analytically compute the value, e.g. cost of always moving towards the goal at some velocity.

An additional benefit of this method is that it makes no assumptions about the user policy $\policyusergoal$, making it more robust to modelling errors. We use this method in our experiments.




\section{Multi-Goal MDP} 
\label{sec:multigoal}

There are often multiple ways to achieve a goal. We refer to each of these ways as a \emph{target}. For a single goal (e.g. object to grasp), let the set of targets (e.g. grasp poses) be $\target \in \Target$. We assume each target has robot and user cost functions $\costtargrobot$ and $\costtarguser$, from which we compute the corresponding value and action-value functions $\vtarg$ and $\qtarg$, and soft-value functions $\vtargsoft$ and $\qtargsoft$. We derive the quantities for goals, $\vgoal, \qgoal, \vgoalsoft, \qgoalsoft$, as functions of these target functions.

\subsection{Multi-Target Assistance}
For simplicity of notation, let $\costgoal(\staterobot, \actionrobot) = \costrobot( \{\staterobot, \goal\}, \actionrobot, 0)$, and $\costtarg(\staterobot, \actionrobot) = \costtargrobot( \staterobot, \actionrobot, 0)$. We assign the cost of a state-action pair to be the cost for the target with the minimum cost-to-go after this state:
\begin{align*}
  \costgoal(\staterobot, \actionrobot) &= \costtargstar(\staterobot, \actionrobot) \qquad \target* = \argmin_\target \vtarg(\staterobot')
\end{align*}
Where $\staterobot'$ is the robot state when action $\actionrobot$ is applied at $\staterobot$.
\begin{theorem} \label{thm:mingoal_assist}
  Let $\vtarg$ be the value function for target $\target$. Define the cost for the goal as above. For an MDP with deterministic transitions, the value and action-value functions $\vmdp$ and $\qmdp$ can be computed as:
\begin{align*}
  \qgoal(\staterobot, \actionrobot) &= \costtargstar(\staterobot, \actionrobot) + \vtargstar(\staterobot') \qquad \target^* = \argmin \vtarg(\staterobot') \\
  \vgoal(\staterobot) &= \min_\target \vtarg(\staterobot)
\end{align*}
\end{theorem}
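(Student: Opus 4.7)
The plan is a Bellman fixed-point argument. Define $V(\staterobot) := \min_\target \vtarg(\staterobot)$ and aim to show that $V$ satisfies the Bellman equation for the goal MDP; uniqueness of the value function then identifies $\vgoal$ with $V$, after which the $\qgoal$ formula falls out of one Bellman backup together with the cost definition.

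The key algebraic step is unfolding the Bellman right-hand side at $V$. With $\staterobot' = \transition(\staterobot, \actionrobot)$ and $\target^* = \argmin_\target \vtarg(\staterobot')$, substituting $\costgoal(\staterobot, \actionrobot) = \costtargstar(\staterobot, \actionrobot)$ and collapsing $V(\staterobot') = \vtargstar(\staterobot')$ by the definition of $\target^*$ yields
\[
\min_\actionrobot \left[ \costgoal(\staterobot, \actionrobot) + V(\staterobot') \right] = \min_\actionrobot \left[ \costtargstar(\staterobot, \actionrobot) + \vtargstar(\staterobot') \right].
\]
I would then show this equals $\min_\target \vtarg(\staterobot)$ by two inequalities. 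The ``$\geq$'' direction is immediate: for every $\actionrobot$ the right-hand bracket equals $Q_{\target^*}(\staterobot, \actionrobot)$, which by the Bellman equation for target $\target^*$ is at least $\vtargstar(\staterobot) \geq \min_\target \vtarg(\staterobot)$, and the outer minimum preserves the bound. For ``$\leq$'', pick $\target' \in \argmin_\target \vtarg(\staterobot)$ together with its optimal action $\actionrobot^*$; the argmin target at the resulting $\staterobot'$ satisfies $\vtargstar(\staterobot') \leq V_{\target'}(\staterobot')$, so plugging $\actionrobot^*$ into the bracket yields a value no larger than $V_{\target'}(\staterobot) = \min_\target \vtarg(\staterobot)$, provided the current-step cost is unchanged by the target swap.

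The main obstacle is precisely that ``$\leq$'' step: it requires justifying that replacing $\costtargprime(\staterobot, \actionrobot^*)$ with $\costtargstar(\staterobot, \actionrobot^*)$ does not increase the per-step cost. In the natural regime where $\costtarg(\staterobot, \actionrobot)$ is $\target$-independent at non-terminal states (e.g., a shared energy/smoothness cost with only the terminal cost depending on the target --- the implicit setting of the paper's applications) the swap is a no-op and the argument closes. Once $\vgoal = V = \min_\target \vtarg$ is established, the $\qgoal$ identity follows by direct substitution into $\qgoal(\staterobot, \actionrobot) = \costgoal(\staterobot, \actionrobot) + \vgoal(\staterobot')$, giving $\qgoal(\staterobot, \actionrobot) = \costtargstar(\staterobot, \actionrobot) + \vtargstar(\staterobot')$ as claimed.
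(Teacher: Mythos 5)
Your proposal is correct in substance and lands on the same two-inequality skeleton as the paper, but packages it differently: the paper runs explicit finite-horizon backward induction (base case at $T$, then a recursive step that yields only $\vmdpt{t-1}(\staterobot) \geq \min_\target \vtargt{t-1}(\staterobot)$), and then closes the other direction with a separate policy-evaluation remark --- that following the optimal policy for any fixed $\target$ upper-bounds the goal cost-to-go, so $\vmdp(\staterobot) \leq \min_\target \vtarg(\staterobot)$. You instead verify both inequalities inside a single Bellman backup and appeal to uniqueness of the value function; in the finite-horizon setting the paper works in, that appeal should really be a backward induction anchored by the terminal case $\vmdpt{T}(\staterobot) = \min_\target \vtargt{T}(\staterobot)$, which you omit but which is immediate. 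The more interesting point is the obstacle you flag in the ``$\leq$'' direction: plugging in the optimal action $\actionrobot^*$ for $\target' = \argmin_\target \vtarg(\staterobot)$ charges the step at $\costtargstar(\staterobot,\actionrobot^*)$ rather than $\costtargprime(\staterobot,\actionrobot^*)$, and closing the bound needs these to be comparable. The paper's proof has exactly the same issue hidden in its informal ``the total cost-to-go is bounded by this value for a deterministic system'' sentence, which likewise silently assumes the per-step cost incurred under $\costgoal$ along a fixed-target policy does not exceed $\costtarg$. Your explicit identification of the needed hypothesis (per-step costs that are $\target$-independent away from terminal states, which holds for the distance-based costs used in the experiments) is a genuine improvement in rigor over the paper's own argument; with that hypothesis stated, your proof is complete.
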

\begin{proof}
We show how the standard value iteration algorithm, computing $\qgoal$ and $\vgoal$ backwards, breaks down at each time step. At the final timestep T, we get:
\begin{align*}
  \qmdpt{T}(\staterobot,\actionrobot) &= \costgoal(\staterobot,\actionrobot)\\
  &= \costtarg(\staterobot, \actionrobot) \qquad \text{for any $\target$}\\
  \vmdpt{T}(\staterobot) &= \min_\actionrobot \costgoal(\staterobot, \actionrobot)\\
  &= \min_\actionrobot \min_\target \costtargstar(\staterobot, \actionrobot) \\
  &= \min_\target \vtargt{T}(\staterobot)
\end{align*}
Since $\vtargt{T}(\staterobot) = \min_\actionrobot \costtargstar(\staterobot, \actionrobot)$ by definition. Now, we show the recursive step:
\begin{align*}
  \qmdpt{t-1}(\staterobot,\actionrobot) &= \costgoal(\staterobot,\actionrobot) + \vmdpt{t}(\staterobot')\\
  &= \costtargstar(\staterobot,\actionrobot) + \min_\target \vtargt{t}(\staterobot') \hspace{1.3em} \target^* = \argmin \vtarg(\staterobot')\\
  &= \costtargstar(\staterobot,\actionrobot) + \vtargstart{t}(\staterobot') \qquad \target^* = \argmin \vtarg(\staterobot')\\
  \vmdpt{t-1}(\staterobot) &= \min_\actionrobot \qmdpt{t-1}(\staterobot, \actionrobot)\\
  &=  \min_\actionrobot \costtargstar(\staterobot,\actionrobot) + \vtargstart{t}(\staterobot') \hspace{1.0em} \target^* = \argmin \vtarg(\staterobot')\\
  & \geq  \min_\actionrobot \min_\target \left( \costtarg(\staterobot,\actionrobot) + \vtargt{t}(\staterobot') \right)\\
  &= \min_\target \vtargt{t-1}(\staterobot)
\end{align*}
Additionally, we know that $\vmdp(\staterobot) \leq \min_{\target} \vtarg(\staterobot)$, since $\vtarg(\staterobot)$ measures the cost-to-go for a specific target, and the total cost-to-go is bounded by this value for a deterministic system. Therefore, $\vmdp(\staterobot) = \min_{\target} \vtarg(\staterobot)$.
\end{proof}

\subsection{Multi-Target Prediction}
Here, we don't assign the goal cost to be the cost of a single target $\costtarg$, but instead use a distribution over targets.
\begin{theorem} \label{thm:mingoal_pred}
  Define the probability of a trajectory and target as $p(\traj, \target) \propto \exp(-\costtarg(\traj))$. Let $\vtargsoft$ and $\qtargsoft$ be the soft-value functions for target $\target$. The soft value functions for goal $\goal$, $\vgoalsoft$ and $\qgoalsoft$, can be computed as:
\begin{align*}
  \vgoalsoft(\staterobot) &= \softmin_\target \vtargsoft(\staterobot)\\
  \qgoalsoft(\staterobot, \actionuser) &= \softmin_\target \qtargsoft(\staterobot, \actionuser)
\end{align*}
\end{theorem}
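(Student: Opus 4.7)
The plan is to exploit the fact that the log partition function $\vgoalsoft$ is defined via an integral $-\log\int_{\trajat{\staterobot}} \exp(-\costgoaluser(\trajat{\staterobot}))$ and that integration commutes with the finite sum over targets. The key reinterpretation of the assumption $p(\traj,\target) \propto \exp(-\costtarg(\traj))$ is that, by marginalizing out $\target$, the induced trajectory cost for the goal satisfies $\exp(-\costgoaluser(\traj)) \propto \sum_\target \exp(-\costtarg(\traj))$, i.e. the goal-level trajectory cost is already the softmin of the target-level costs. Once this identification is made, both claims become a single line of algebra with a swap of $\sum_\target$ and $\int_\traj$.

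For the value function, I would start from the definition of the soft value as the log partition of trajectories rooted at $\staterobot$, substitute the softmin expression above for $\costgoaluser(\traj)$, pull the finite sum over $\target$ outside the integral over $\trajat{\staterobot}$, and recognize each inner integral as $\exp(-\vtargsoft(\staterobot))$ by the definition of the target-level soft value. Applying $-\log$ then yields $\vgoalsoft(\staterobot) = -\log \sum_\target \exp(-\vtargsoft(\staterobot)) = \softmin_\target \vtargsoft(\staterobot)$.

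For the Q function, the same manipulation works but I would split the trajectory cost into the first-step contribution and the remainder: writing $\qgoalsoft(\staterobot,\actionuser)$ as $-\log \int_{\trajatp{\staterobot'}} \exp(-\costgoaluser(\traj))$ over trajectories whose first step is $(\staterobot,\actionuser)$, factor $\exp(-\costtarg(\staterobot,\actionuser))$ out of the inner integral since it does not depend on the tail, obtain $\sum_\target \exp(-\costtarg(\staterobot,\actionuser) - \vtargsoft(\staterobot'))$, and then fold the exponent back into $\qtargsoft(\staterobot,\actionuser)$ using the target-level Bellman equation from \sref{sec:prediction}. Taking $-\log$ yields the stated softmin.

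The only subtle point, and the one I expect to be the main obstacle, is pinning down that the theorem's assumption $p(\traj,\target)\propto\exp(-\costtarg(\traj))$ indeed forces $\exp(-\costgoaluser(\traj))$ to be proportional to $\sum_\target \exp(-\costtarg(\traj))$, rather than some other marginalization; the normalizing constants must either be absorbed consistently or shown to cancel when taking $-\log$ of ratios. Once that identification is justified (e.g. by noting that both sides agree up to an additive constant in the cost, which leaves soft-value differences and thus $\policyuser$ unchanged), everything else is a mechanical application of $\softmin_y f(y) = -\log\sum_y e^{-f(y)}$ together with Fubini for the sum/integral swap.
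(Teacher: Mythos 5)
Your proposal is correct and follows essentially the same route as the paper: both hinge on marginalizing the target out of the joint $p(\traj,\target)\propto\exp(-\costtarg(\traj))$, swapping the resulting finite sum over $\target$ with the integral over trajectory tails, and recognizing each inner integral as $\exp(-\vtargsoft)$ (resp.\ $\exp(-\qtargsoft)$ after factoring out the first-step cost). The only cosmetic difference is that the paper carries the marginalization through the policy $\policyuser(\actionuser_t \mid \staterobot_t)$ and reads off $\vgoalsoft,\qgoalsoft$ from its $\exp(V-Q)$ form, whereas you work with the log-partition functions directly; your closing remark about normalization constants cancelling in soft-value differences is exactly how the paper sidesteps that issue.
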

\begin{proof}
As the cost is additive along the trajectory, we can expand out $\exp(-\costtarg(\traj))$ and marginalize over future inputs to get the probability of an input now:
\begin{align*}
  \policyuser(\actionuser_t,\target| \staterobot_t) &= \frac{ \exp(-\costtarg(\staterobot_t, \actionuser_t)) \int \exp(-\costtarg(\trajatp{\staterobot_{t+1}})) } {\sum_{\target'}\int \exp(-\costtargprime(\trajat{\staterobot_{t}}))} 
\end{align*}
Where the integrals are over all trajectories. By definition, $\exp(-\vtargsoftt{t}(\staterobot_t)) = \int \exp(-\costtarg(\trajat{\staterobot_t}))$:
\begin{align*}
  &= \frac{ \exp(-\costtarg(\staterobot_t, \actionuser_t)) \exp(-\vtargsoftt{t+1}(\staterobot_{t+1}))} {\sum_{\target'} \exp(-\vsoft_{\target',t}(\staterobot_{t}) )} \\
  &= \frac{ \exp(-\qtargsoftt{t}(\staterobot_t, \actionuser_t))} {\sum_{\target'} \exp(-\vsoft_{\target',t}(\staterobot_{t}) )} 
\end{align*}
Marginalizing out $\target$ and simplifying:
\begin{align*}
  & \policyuser(\actionuser_t| \staterobot_t) = \frac{\sum_\target \exp( -\qtargsoftt{t}(\staterobot_t, \actionuser_t))} {\sum_{\target} \exp(-\vtargsoftt{t}(\staterobot_{t}) )} \\
  &= \exp \left( \log \left( \frac{\sum_\target \exp( -\qtargsoftt{t}(\staterobot_t, \actionuser_t))} {\sum_{\target} \exp(-\vtargsoftt{t}(\staterobot_{t}) )} \right) \right)\\
  &= \exp \left( \softmin_\target \vtargsoftt{t}(\staterobot_t) - \softmin_\target \qtargsoft{t}(\staterobot_t, \actionuser_t) \right)
\end{align*}
As $\vgoalsoftt{t}$ and $\qgoalsoftt{t}$ are defined such that $\policyuser_t(\actionuser | \staterobot, \goal) = \exp(\vgoalsoftt{t}(\staterobot) -\qgoalsoftt{t}(\staterobot, \actionuser))$, our proof is complete.
\end{proof}

%
%
%

\begin{figure}[t]
\centering
 \begin{subfigure}{0.24\textwidth}
   \centering 
   \includegraphics[width=0.97\textwidth, trim=440 250 500 210, clip=true]{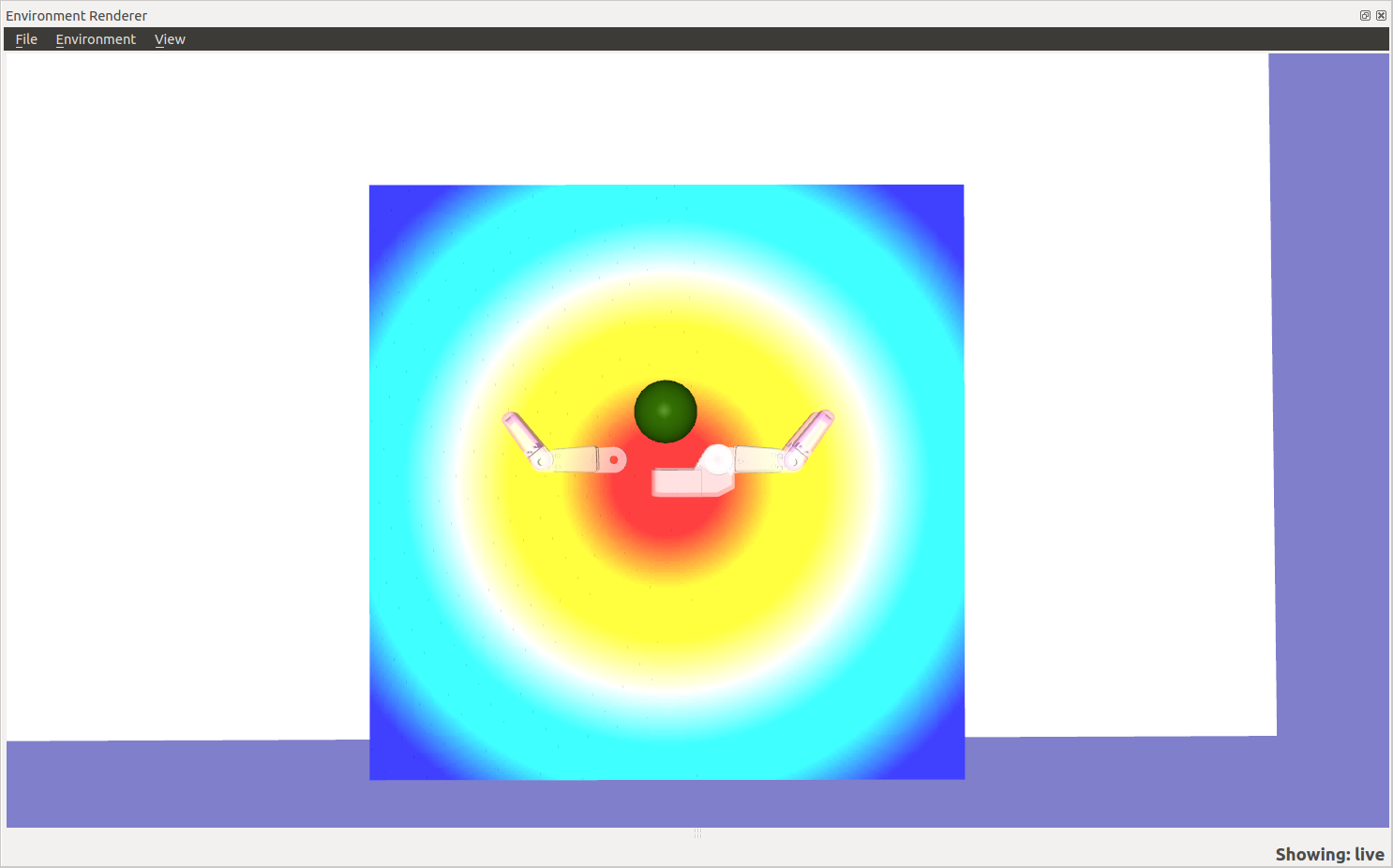}
  \caption{}
 \label{fig:multigoal_1}
 \end{subfigure}
 \begin{subfigure}{0.24\textwidth}
   \centering 
   \includegraphics[width=0.97\textwidth, trim=440 250 500 210, clip=true]{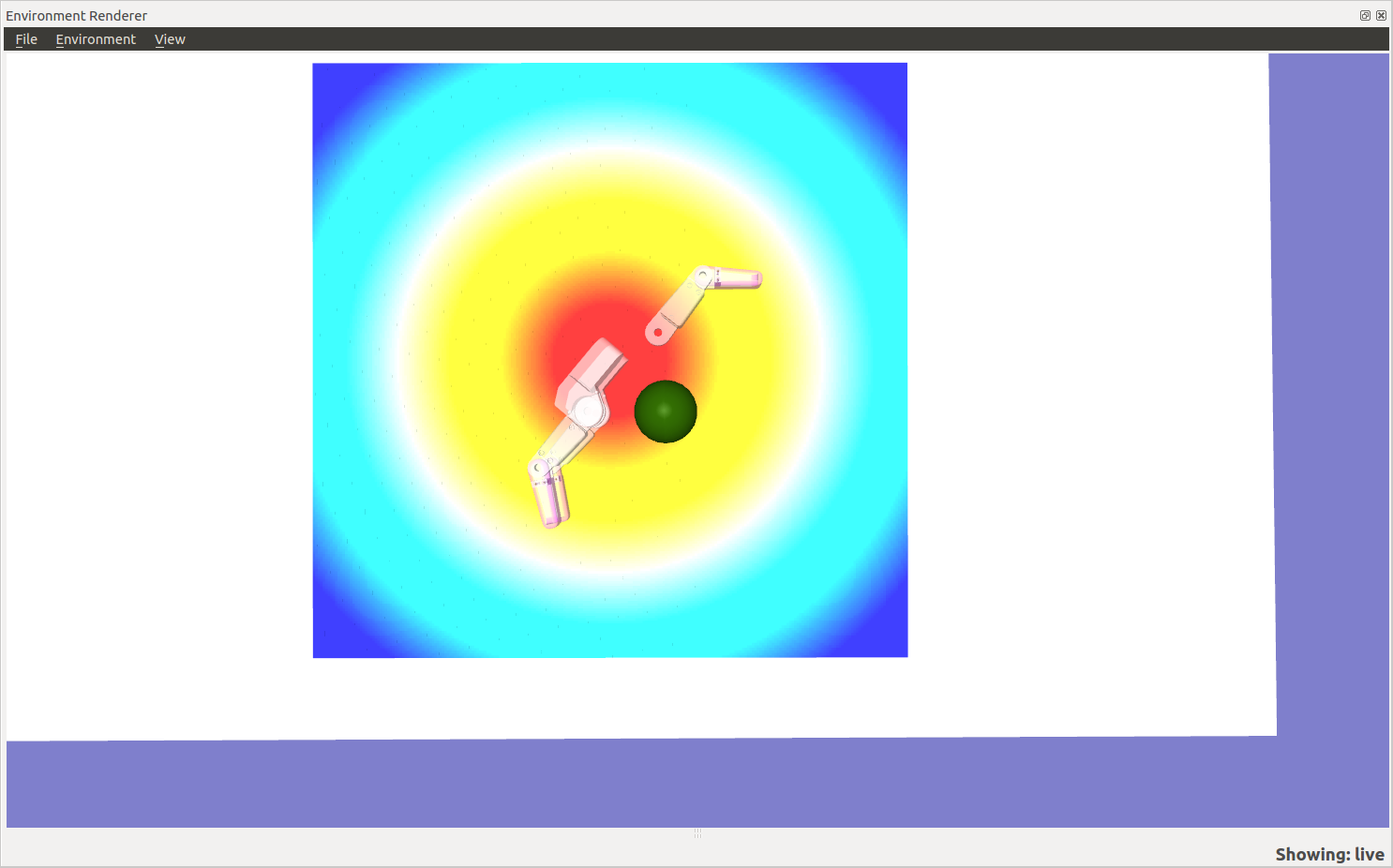}
  \caption{}
 \label{fig:multigoal_2}
 \end{subfigure}
 \begin{subfigure}{0.24\textwidth}
   \centering 
   \includegraphics[width=0.97\textwidth, trim=440 250 500 210, clip=true]{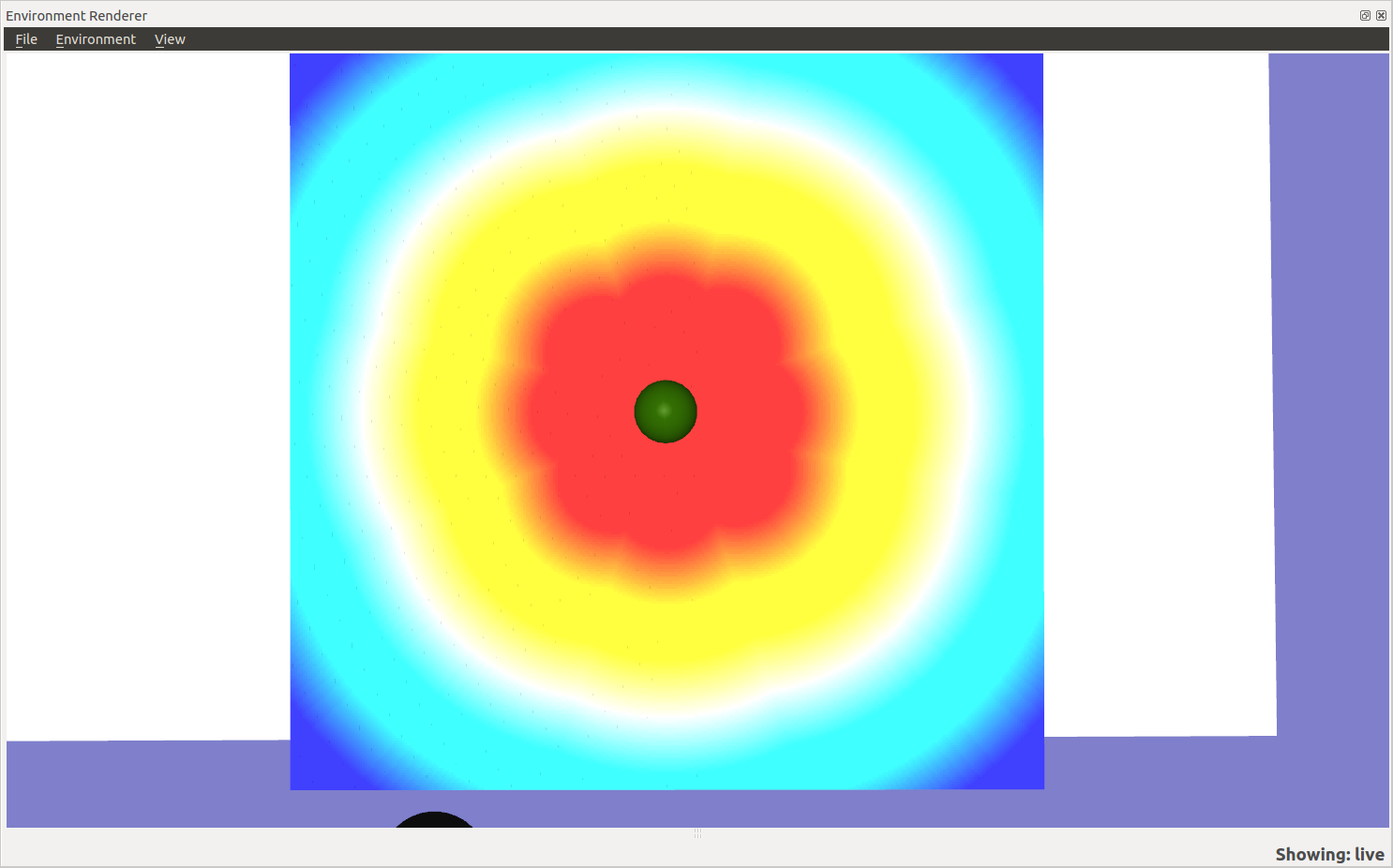}
  \caption{}
 \label{fig:multigoal_3_arb}
 \end{subfigure}
 \begin{subfigure}{0.24\textwidth}
   \centering 
   \includegraphics[width=0.97\textwidth, trim=440 250 500 210, clip=true]{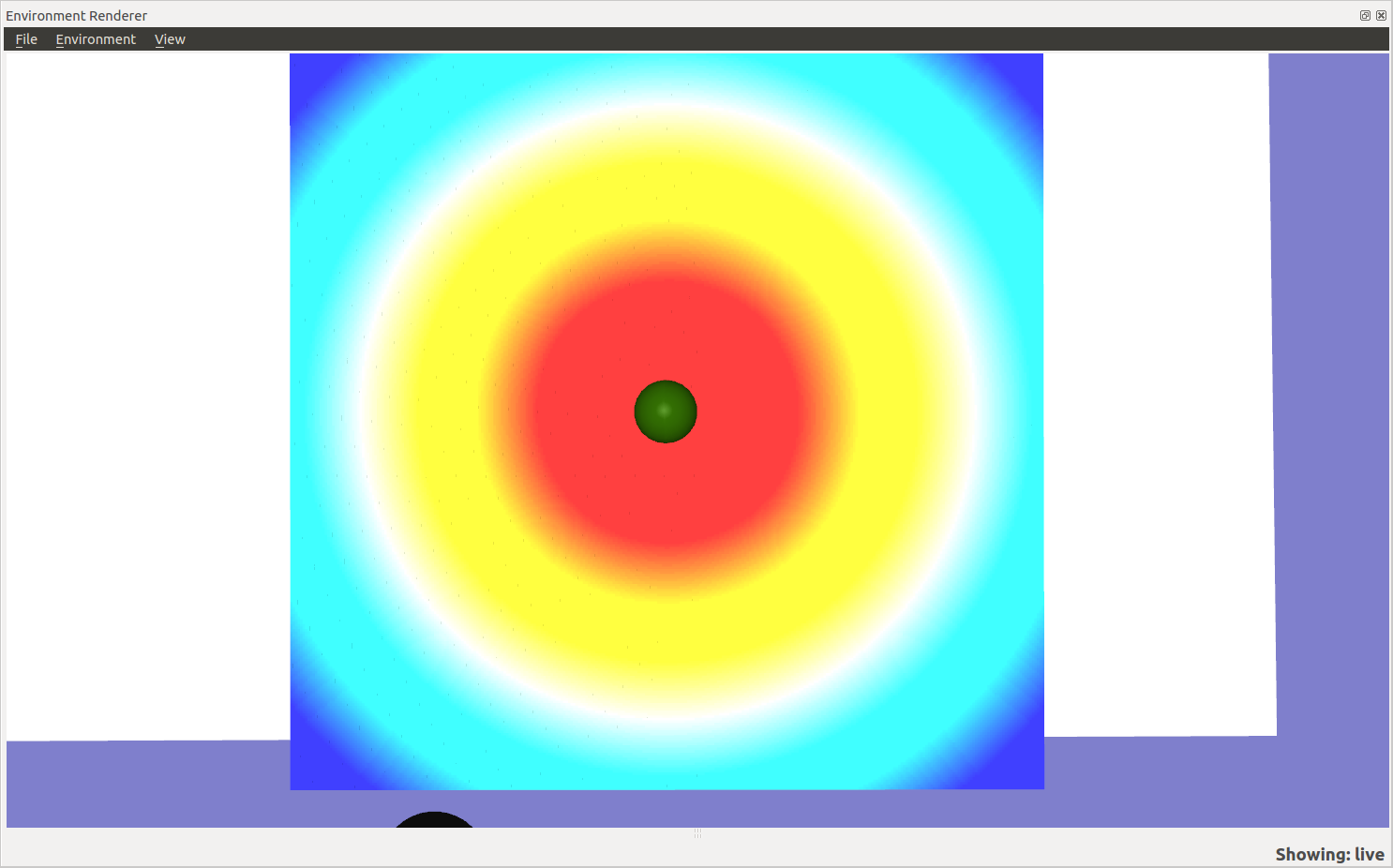}
  \caption{}
 \label{fig:multigoal_3_pred}
 \end{subfigure}
 \caption{Value function for a goal (grasp the ball) decomposed into value functions of targets (grasp poses). (\subref{fig:multigoal_1}, \subref{fig:multigoal_2}) Two targets and their corresponding value function $\vtarg$. In this example, there are 16 targets for the goal. (\subref{fig:multigoal_3_arb}) The value function of a goal $\vgoal$ used for assistance, corresponding to the minimum of all 16 target value functions (\subref{fig:multigoal_3_pred}) The soft-min value function $\vgoalsoft$ used for prediction, corresponding to the soft-min of all 16 target value functions.}
 \label{fig:multigoal}
\end{figure}

\section{User Study} 
\label{sec:experiments}
We compare two methods for shared autonomy in a user study: our method, referred to as \emph{policy}, and a conventional predict-then-blend approach based on Dragan and Srinivasa~\cite{dragan_2013_assistive}, referred to as \emph{blend}.
 
Both systems use the same prediction algorithm, based on the formulation described in \sref{sec:prediction}. For computational efficiency, we follow Dragan and Srinivasa~\cite{dragan_2013_assistive} and use a second order approximation about the optimal trajectory. They show that, assuming a constant Hessian, we can replace the difficult to compute soft-min functions $\vtargsoft$ and $\qtargsoft$ with the min value and action-value functions $\vtarg$ and $\qtarg$. 
 
Our policy approach requires specifying two cost functions, $\costtarguser$ and $\costtargrobot$, from which everything is derived. For $\costtarguser$, we use a simple function based on the distance $d$ between the robot state $\staterobot$ and target $\target$:
 \begin{align*}
   \costtarguser(\staterobot, \actionuser) &= \left\{ \begin{array}{cc} \alpha & d > \delta \\ \frac{\alpha}{\delta} d & d\leq \delta \end{array} \right.
 \end{align*}

That is, a linear cost near a goal $(d \leq \delta)$, and a constant cost otherwise. This is by no means the best cost function, but it does provide a baseline for performance. We might expect, for example, that incorporating collision avoidance into our cost function may enable better performance~\cite{hauser_2011}.
 
We set $\costtargrobot(\staterobot, \actionrobot, \actionuser) = \costtarguser(\staterobot, \actionuser) + (\actionrobot - \userinputtoaction(\actionuser))^2$, penalizing the robot from deviating from the user command while optimizing their cost function. 

The predict-then-blend approach of Dragan and Srinivasa requires estimating how confident the predictor is in selecting the most probable goal. This confidence measure controls how autonomy and user input are arbitrated. For this, we use the distance-based measure used in the experiments of Dragan and Srinivasa~\cite{dragan_2013_assistive}, $\text{conf} = \max\left(0, 1-\frac{d}{D}\right)$, where $d$ is the distance to the nearest target, and $D$ is some threshhold past which confidence is zero.

\subsection{Hypotheses}
\label{sec:hypoths}
Our experiments aim to evaluate the task-completion efficiency and user satisfaction of our system compared to the predict-then-blend approach. Efficiency of the system is measured in two ways: the total execution time, a common measure of efficiency in shared teleoperation~\cite{crandall_2002}, and the total user input, a measure of user effort. User satisfaction is assessed through a survey.

This leads to the following hypotheses:

\noindent \textbf{H1} \indent \textit{Participants using the policy method will grasp objects significantly faster than the blend method}

\noindent \textbf{H2} \indent \textit{Participants using the policy method will grasp objects with significantly less control input than the blend method}

\noindent \textbf{H3} \indent \textit{Participants will agree more strongly on their preference for the policy method compared to the blend method}

\subsection{Experiment setup}
\label{sec:experiment_setup}

\begin{figure}[t]
\centering
\includegraphics[width=0.49\textwidth, trim=150 0 200 0, clip=true]{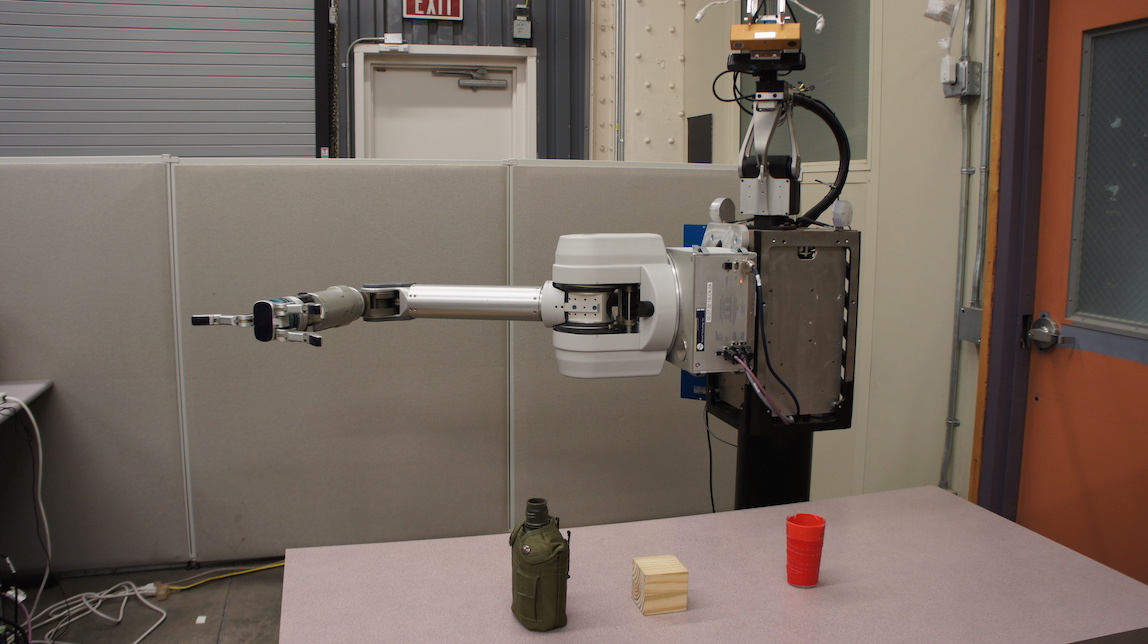}
\caption{Our experimental setup for object grasping. Three objects - a canteen, block, and glass - were placed on the table in front of the robot in a random order. Prior to each trial, the robot moved to the configuration shown. Users picked up each object using each teleoperation system.}
 \label{fig:exper_setup}
\end{figure} 

We recruited 10 participants (9 male, 1 female), all with experience in robotics, but none with prior exposure to our system. To counterbalance individual differences of users, we chose a within-subjects design, where each user used both systems.

We setup our experiments with three objects on a table - a canteen, a block, and a cup. See \figref{fig:exper_setup}. Users teleoperated a robot arm using two joysticks on a Razer Hydra system. The right joystick mapped to the horizontal plane, and the left joystick mapped to the height. A button on the right joystick closed the hand. Each trial consisted of moving from the fixed start pose, shown in \figref{fig:exper_setup}, to the target object, and ended once the hand was closed.

At the start of the study, users were told they would be using two different teleoperation systems, referred to as ``method1'' and ``method2''. Users were not provided any information about the methods. Prior to the recorded trials, users went through a training procedure: First, they teleoperated the arm directly, without any assistance or objects in the scene. Second, they grasped each object one time with each system, repeating if they failed the grasp. Third, they were given the option of additional training trials for either system if they wished.

Users then proceeded to the recorded trials. For each system, users picked up each object one time in a random order. Half of the users did all blend trials first, and half did all policy trials first. Users were told they would complete all trials for one system before the system switched, but were not told the order. However, it was obvious immediately after the first trail started, as the policy method assists from the start pose and blend does not. Upon completing all trials for one system, they were told the system would be switching, and then proceeded to complete all trials for the other system. If users failed at grasping (e.g. they knocked the object over), the data was discarded and they repeated that trial. Execution time and total user input were measured for each trial.


Upon completing all trials, users were given a short survey. For each system, they were asked for their agreement on a 1-7 Likert scale for the following statements:
\begin{enumerate}
  \item ``I felt in \emph{control}''
  \item ``The robot did what I \emph{wanted}''
  \item ``I was able to accomplish the tasks \emph{quickly}''
  \item ``If I was going to teleoperate a robotic arm, I would \emph{like} to use the system''
\end{enumerate}

They were also asked ``which system do you \emph{prefer}'', where $1$ corresponded to blend, $7$ to policy, and $4$ to neutral. Finally, they were asked to explain their choices and provide any general comments. 

\subsection{Results}

\begin{figure}[t]
   \includegraphics{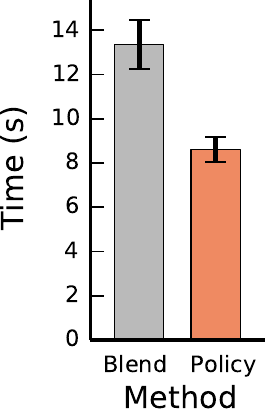}
 \hfill
   \includegraphics[trim=0 0 0 0, clip=true]{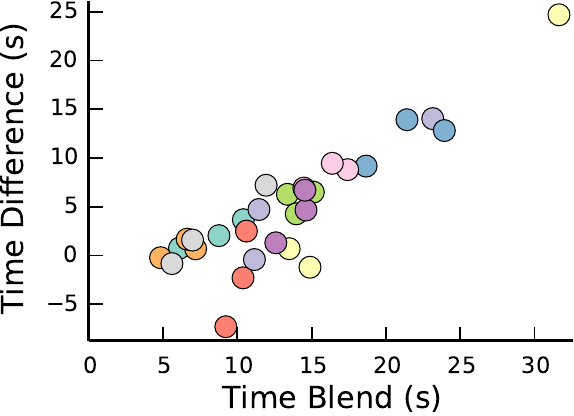}

   \includegraphics{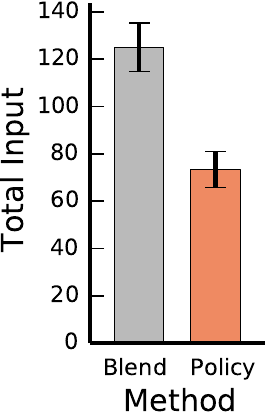}
   \hfill
   \includegraphics[trim=0 0 0 0, clip=true]{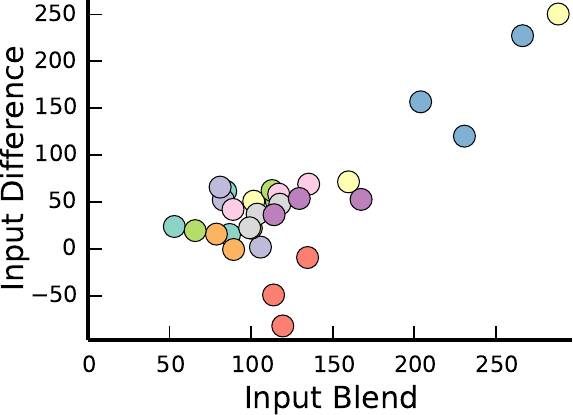}
   \caption{Task completion times and total input for all trials. On the left, means and standard errors for each system. On the right, the time and input of blend minus policy, as a function of the time and total input of blend. Each point corresponds to one trial, and colors correspond to different users. We see that policy was faster and resulted in less input in most trials. Additionally, the difference between systems increases with the time/input of blend.}
 \label{fig:time_control_plots}
 \end{figure}


\begin{figure}[t]
   \centering 
     \includegraphics[trim=0 0 0 0, clip=true]{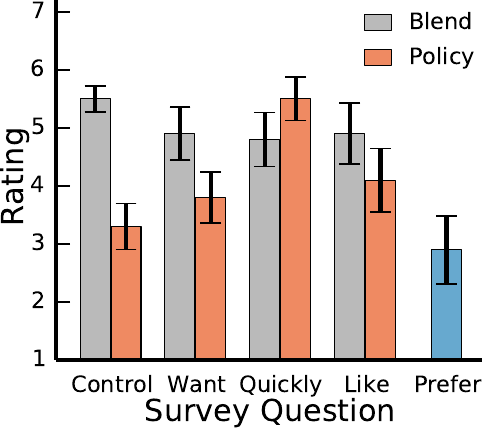}
   \hfill
     \includegraphics[trim=0 0 0 0, clip=true]{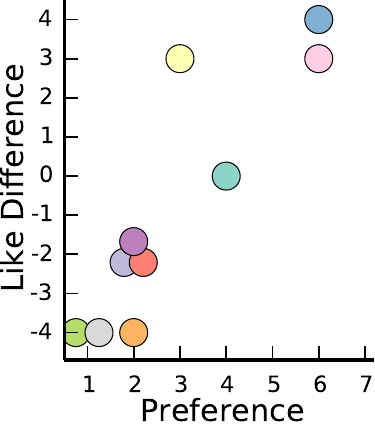}
   \caption{On the left, means and standard errors from survey results from our user study. For each system, users were asked if they felt in \emph{control}, if the robot did what they \emph{wanted}, if they were able to accomplish tasks \emph{quickly}, and if they would \emph{like} to use the system. Additionally, they were asked which system they \emph{prefer}, where a rating of 1 corresponds to blend, and 7 corresponds to policy. On the right, the \emph{like} rating of policy minus blend, plotted against the \emph{prefer} rating. When multiple users mapped to the same coordinate, we plot multiple dots around that coordinate. Colors correspond to different users, where the same user has the same color in \figref{fig:time_control_plots}. }
 \label{fig:survey_means}
 \end{figure}

Users were able to successfully use both systems. There were a total of two failures while using each system - once each because the user attempted to grasp too early, and once each because the user knocked the object over. These experiments were reset and repeated.

We assess our hypotheses using a significance level of $\alpha=0.05$, and the Benjamini–Hochberg procedure to control the false discovery rate with multiple hypotheses.

Trial times and total control input were assessed using a two-factor repeated measures ANOVA, using the assistance method and object grasped as factors. Both trial times and total control input had a significant main effect. We found that our policy method resulted in users accomplishing tasks more quickly, supporting \textbf{H1} $(F(1,9)=12.98, p=0.006)$. Similarly, our policy method resulted in users grasping objects with less input, supporting \textbf{H2} $(F(1,9)=7.76, p = 0.021)$. See \figref{fig:time_control_plots} for more detailed results.


To assess user preference, we performed a Wilcoxon paired signed-rank test on the survey question asking if they would \emph{like} to use each system, and a Wilcoxon rank-sum test on the survey question of which system they \emph{prefer} against the null hypothesis of no preference (value of 4). There was no evidence to support \textbf{H3}.

In fact, our data suggests a trend towards the opposite - that users prefer blend over policy. When asked if they would \emph{like} to use the system, there was a small difference between methods (Blend: $M=4.90, SD=1.58$, Policy: $M=4.10, SD=1.64)$. However, when asked which system they \emph{preferred}, users expressed a stronger preference for blend ($M=2.90, SD=1.76$). While these results are not statistically significant according to our Wilcoxon tests and $\alpha=0.05$, it does suggest a trend towards preferring blend. See \figref{fig:survey_means} for results for all survey questions.


We found this surprising, as prior work indicates a strong correlation between task completion time and user satisfaction, even at the cost of control authority, in both shared autonomy~\cite{dragan_2013_assistive, hauser_2013} and human-robot teaming~\cite{gombolay_2014} settings.\footnote{In prior works where users preferred greater control authority, task completion times were indistinguishable~\cite{kim_2012}.} Not only were users faster, but they recognized they could accomplish tasks more quickly (see \emph{quickly} in \figref{fig:survey_means}). One user specifically commented that  ``(Policy) took more practice to learn\ldots but once I learned I was able to do things a little faster. However, I still don't like feeling it has a mind of it's own''.


As shown in \figref{fig:survey_means}, users agreed more strongly that they felt in \emph{control} during blend. Interestingly, when asked if the robot did what they \emph{wanted}, the difference between methods was less drastic. This suggests that for some users, the robot's autonomous actions were in-line with their desired motions, even though the user was not in control.

Users also commented that they had to compensate for policy in their inputs. For example, one user stated that ``(policy) did things that I was not expecting and resulted in unplanned motion''. This can perhaps be alleviated with user-specific policies, matching the behavior of particular users.


Some users suggested their preferences may change with better understanding. For example, one user stated they ``disliked (policy) at first, but began to prefer it slightly after learning its behavior. Perhaps I would prefer it more strongly with more experience''. It is possible that with more training, or an explanation of how policy works, users would have preferred the policy method. We leave this for future work.

\subsection{Examining trajectories}


\begin{figure}[t]
  \centering
  \begin{subfigure}{0.233\textwidth}
    \centering 
    \includegraphics[trim=0 0 0 0, clip=true]{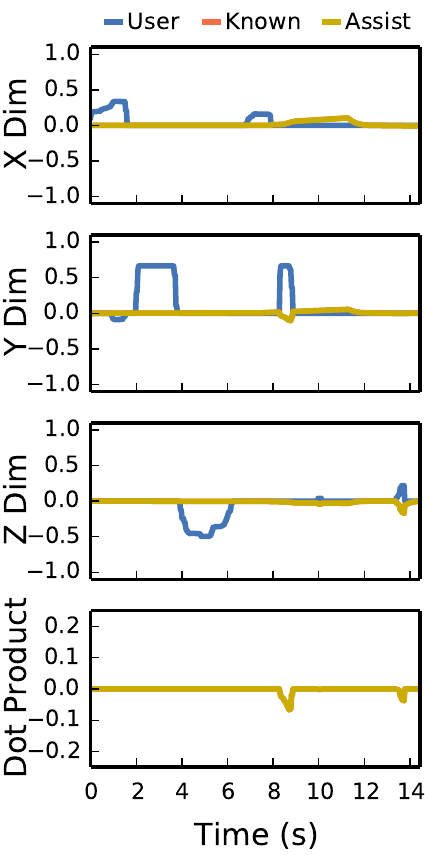}
    \caption{Blend}
    \label{fig:user7_blend}
  \end{subfigure}
  \hfill
  \begin{subfigure}{0.233\textwidth}
    \centering 
    \includegraphics[trim=0 0 0 0, clip=true]{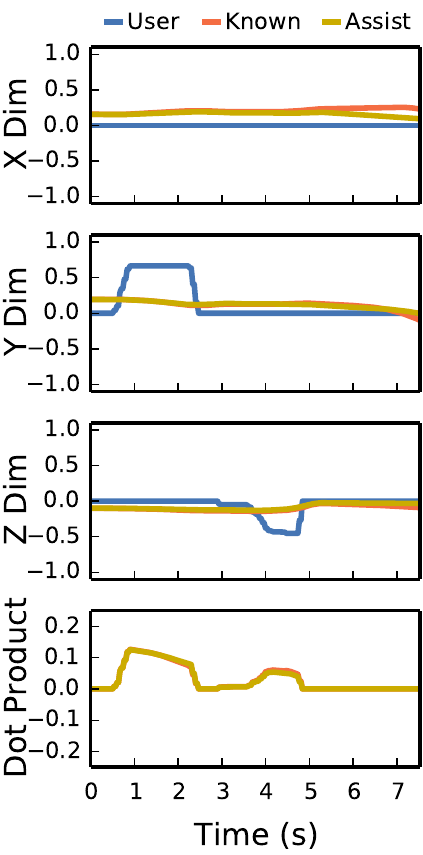}
    \caption{Policy}
    \label{fig:user7_policy}
  \end{subfigure}
  \caption{User input and autonomous actions for a user who preferred policy assistance, using (\subref{fig:user7_blend})~blending and (\subref{fig:user7_policy})~policy for grasping the same object. We plot the user input, autonomous assistance with the estimated distribution, and what the autonomous assistance would have been had the predictor known the true goal. We subtract the user input from the assistance when plotting, to show the autonomous action as compared to direct teleoperation. The top 3 figures show each dimension separately. The bottom shows the dot product between the user input and assistance action. This user changed their strategy during policy assistance, letting the robot do the bulk of the work, and only applying enough input to correct the robot for their goal. Note that this user never applied input in the `X' dimension in this or any of their three policy trials, as the assistance always went towards all objects in that dimension.}
  \label{fig:user7}
\end{figure}

\begin{figure}[t]
  \centering
  \begin{subfigure}{0.233\textwidth}
    \centering 
    \includegraphics[trim=0 0 0 0, clip=true]{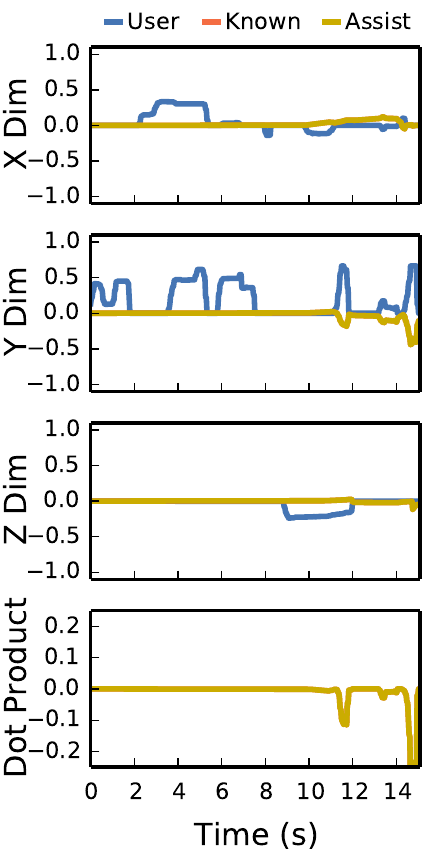}
    \caption{Blend}
    \label{fig:user6_blend}
  \end{subfigure}
  \hfill
  \begin{subfigure}{0.233\textwidth}
    \centering 
    \includegraphics[trim=0 0 0 0, clip=true]{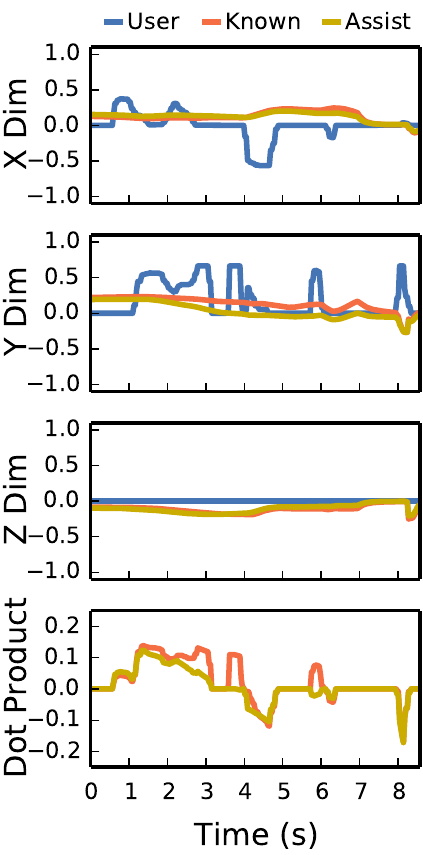}
    \caption{Policy}
    \label{fig:user6_policy}
  \end{subfigure}
  \caption{User input and autonomous assistance for a user who preferred blending, with plots as in \figref{fig:user7}. The user inputs sometimes opposed the autonomous assistance (such as in the `X' dimension) for both the estimated distribution and known goal, suggesting the cost function didn't accomplish the task in the way the user wanted. Even still, the user was able to accomplish the task faster with the autonomous assistance then blending. }
  \label{fig:user6}
\end{figure}

Users with different preferences had very different strategies for using each system. Some users who preferred the assistance policy changed their strategy to take advantage of the constant assistance towards all goals, applying minimal input to guide the robot to the correct goal (\figref{fig:user7}). In contrast, users who preferred blending were often opposing the actions of the autonomous policy (\figref{fig:user6}). This suggests the robot was following a strategy different from their own.

%

\section{Conclusion and Future Work} 
\label{sec:conclusion}

We presented a framework for formulating shared autonomy as a POMDP. Whereas most methods in shared autonomy predict a single goal, then assist for that goal (predict-then-blend), our method assists for the entire distribution of goals, enabling more efficient assistance. We utilized the MaxEnt IOC framework to infer a distribution over goals, and Hindsight Optimization to select assistance actions. We performed a user study to compare our method to a predict-then-blend approach, and found that our system enabled faster task completion with less control input. Despite this, users were mixed in their preference, trending towards preferring the simpler predict-then-blend approach.

We found this surprising, as prior work has indicated that users are willing to give up control authority for increased efficiency in both shared autonomy~\cite{dragan_2013_assistive, hauser_2013} and human-robot teaming~\cite{gombolay_2014} settings. Given this discrepancy, we believe more detailed studies are needed to understand precisely what is causing user dissatisfaction. Our cost function could then be modified to explicitly avoid dissatisfying behavior. Additionally, our study indicates that users with different preferences interact with the system in very different ways. This suggests a need for personalized learning of cost functions for assistance.


Implicit in our model is the assumption that users do not consider assistance when providing inputs - and in particular, that they do not adapt their strategy to the assistance. We hope to alleviate this assumption in both prediction and assistance by extending our model as a stochastic game.

\section*{Acknowledgments}
This work was supported in part by NSF GRFP No. DGE-1252522, NSF Grant No. 1227495, the DARPA Autonomous Robotic Manipulation Software Track program, the Okawa Foundation, and an Office of Naval Research Young Investigator Award.





\bibliographystyle{plainnat}
\bibliography{references}

\end{document}